\documentclass[letterpaper]{article}
\usepackage[preprint]{preprint}
\usepackage[hyphens]{url}
\usepackage{graphicx}
\usepackage{natbib}
\usepackage{caption}
\usepackage{algorithm}
\usepackage{algorithmic}
\usepackage{amsmath,amsfonts,amssymb}
\usepackage{amsthm}
\newtheorem{proposition}{Proposition}

\newtheorem{theorem}{Theorem}

\theoremstyle{definition}

\newtheorem{assumption}{Assumption}
\newtheorem{remark}{Remark}

\usepackage{multirow}
\usepackage{newfloat}
\usepackage{listings}
\DeclareCaptionStyle{ruled}{labelfont=normalfont,labelsep=colon,strut=off}
\floatstyle{ruled}
\newfloat{listing}{tb}{lst}{}
\floatname{listing}{Listing}

\usepackage{booktabs}

\title{When Muon Meets Task Interference: A Spectral Perspective on Continual Learning and Model Merging}
\author{
Shangge Liu\textsuperscript{\rm 1}, Yuehan Yin\textsuperscript{\rm 1}, Yinghuan Shi\textsuperscript{\rm 1}, Lei Wang\textsuperscript{\rm 2}, Wenbin Li\textsuperscript{\rm 1}\corresponding\\
}
\affiliations{
\textsuperscript{\rm 1}State Key Laboratory for Novel Software Technology, Nanjing University\\
\textsuperscript{\rm 2}University of Wollongong\\
}

\begin{document}

\maketitle

\begin{abstract}
Continual learning (CL) and model merging (MM) both aim to obtain a single model that performs well across multiple tasks, challenged respectively by catastrophic forgetting and weight-disentanglement error. In the literature, these difficulties are merely treated separately and mitigated through a variety of solutions, while the geometry induced by the base optimizer is treated as an implementation detail. In this work, we show that the two difficulties are in fact two instances of the same phenomenon: a parameter update useful for one task shifts the model's outputs on another. We formalize this shared phenomenon as \textit{task interference} and reduce it to a common layer-wise Frobenius inner product $\langle \Delta W_\ell, J_\ell(x)\rangle_F$. This quantity, in turn, is utilized to expose the role of the optimizer. We theoretically derive an upper bound that isolates the spectral norm $\|\Delta W_\ell\|_2$ as an optimizer-controllable factor of task interference, and a per-mode analysis shows that this bound tracks the dominant part of the empirical interference. Specifically, we then identify the recent Muon optimizer as a mechanism that regulates this factor by construction. Our work reveals that its elegant control on spectral norm tightens the interference bound for both CL and MM, positioning Muon as a principled optimizer-centric approach complementary to existing solutions. Our theoretcal analysis is well validated by experimental results. Replacing the AdamW optimizer with Muon improves accuracy by up to +5.02 points on the eight-task model-merging benchmark across three CLIP backbones. For continual learning, Muon also delivers uniformly positive gains across ten class-incremental protocols, three task-incremental protocols, and the 11-task MTIL benchmark.

\end{abstract}
\section{Introduction}
Modern pre-trained models are expected to acquire capabilities from multiple tasks, domains, and data streams while retaining previously learned knowledge. Two major paradigms address this objective under different practical constraints. Continual learning (CL)~\cite{Li2018LwF,li2025libcontinual} updates a single model sequentially, often with limited or no access to earlier data. And it's challenged by \textit{catastrophic forgetting}~\cite{catastrophicmccloskey1989}: learning a new task could degrade performance on previous ones. Model merging (MM)~\cite{ZSCL}, by contrast, fine-tunes task-specific models independently and combines them in weight space into a single model, but the resulting task vectors may interfere after composition and violate \textit{weight disentanglement}~\cite{ortizjimenez2023}. 

\begin{figure}[t]
\centering
\includegraphics[width=\linewidth]{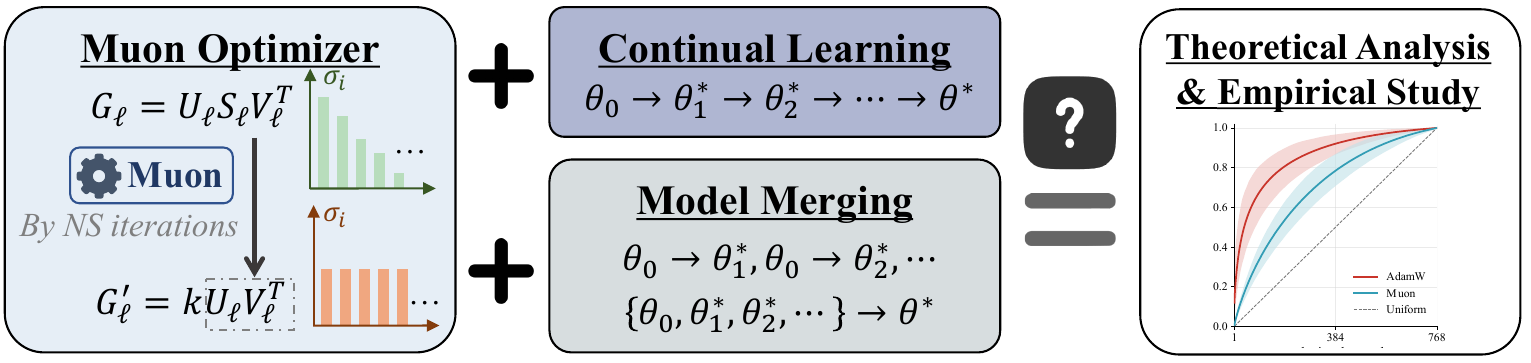}
\caption{Overview of this study. We reinterpret the Muon optimizer through the lens of continual learning and model merging, and provide a unified theoretical analysis together with empirical validation on both paradigms.}
\label{fig:framework}
\vspace{-15pt}
\end{figure}

Existing methods mitigate these difficulties through several mechanisms. For CL, existing solutions add regularizations on important weights~\cite{EWC}, replay stored or generated samples~\cite{iCaRL}, project gradients away from previous task subspaces~\cite{farajtabar2020}, or allocate task-specific parameters~\cite{rusu2016} to mitigate forgetting. For MM, methods resolve sign conflicts~\cite{yadav2023ties}, learn merging coefficients~\cite{yang2024adamerging}, or regularize weight update~\cite{liu2026understanding} to promote weight disentanglement.
These methods operate at the level of data access, objectives, architectures, parameter spaces, or merging rules. In contrast, the geometry induced by the base optimizer is usually treated as an implementation detail, without sufficient attention. This leaves a fundamental question insufficiently answered: \textit{how does the optimizer affect forgetting in CL and disentanglement error in MM, and is there a principled way to analyse the impact of the optimizer alone?}

To this end, we observe that although catastrophic forgetting and weight-disentanglement error are formalized separately, they are in fact two instances of the same phenomenon:\textit{ \textbf{a parameter update that is useful for one task can change the model's outputs on another}}. We refer to this shared phenomenon as \textit{\textbf{task interference}}, and formalize it uniformly as follows. In both cases, task interference takes the form of an output change relative to a fixed reference point, and reduces to a common layer-wise quantity. Concretely, this common quantity is proved the Frobenius inner product $\langle \Delta W_\ell,\, J_\ell(x)\rangle_F$ between the weight update and the Jacobian. 
This unification is the starting point of our analysis. As will be seen, any mechanism that provably shrinks this inner product mitigates forgetting in CL and reduces disentanglement error in MM at once.

We then decompose this shared quantity into two distinct factors: one determined by the pre-trained backbone and the task distribution, and the other being the spectral norm of the update $\|\Delta W_\ell\|_2$, which is shaped entirely by training and hence the optimizer.
This decomposition isolates the spectral norm as an optimizer-controllable factor of task interference. 

To investigate how much the spectral norm governs task interference in practice, we conduct a per-mode analysis, which shows 

an optimizer that keeps the spectral norm small could reduce interference in both paradigms.

Upon the above findings, we point out that the recent Muon optimizer~\cite{jordan2024muon} is precisely such a mechanism. It has been shown to be the closed-form solution of steepest descent under the spectral norm~\cite{bernstein2024}, which is exactly the norm that our decomposition isolates as the optimizer-controllable factor of task interference.
We therefore reinterpret Muon through the lens of task interference. That is, it regulates exactly the quantity that governs task interference, and it does so by construction, without touching the loss, the parameterization, or the data pipeline.
A single spectral-norm control thus simultaneously tightens the interference bound for both continual learning and model merging, positioning Muon as a principled, optimizer-centric approach complementary to the existing mechanism-centric solutions in both literatures.

We experimentally validate our theoretical analysis across both of CL and MM paradigms with all other components fixed.
On the standard eight-task model-merging benchmark, replacing the AdamW optimizer with Muon during fine-tuning improves absolute accuracy by $+5.02$, $+3.42$, and $+3.82$ points on CLIP-pretrained ViT-B/32, ViT-B/16, and ViT-L/14. For CL, Muon also delivers uniformly positive gains across ten class-incremental protocols and three task-incremental protocols, and improves all three metrics on the 11-task MTIL benchmark. 

Our contributions in this work are as follows.
\begin{itemize}
    \item We unify catastrophic forgetting in CL and weight-disentanglement error in MM as two instances of \textit{task interference}, and show that both reduce to the same term $\langle \Delta W_\ell, J_\ell(x)\rangle_F$, providing a common analytical target for two paradigms that are typically studied separately.
    \item Our work derives an upper bound on task interference that isolates $\|\Delta W_\ell\|_2$ as an optimizer-controllable factor, and show via a per-mode analysis that the bound tracks the dominant part of the empirical interference.
    \item We connect the interference bound to Muon's spectral-norm steepest descent, showing that a single change of optimizer mitigates forgetting in CL and disentanglement error in MM, positioning Muon as a principled optimizer-centric approach complementary to existing  solutions.
    \item Our experimental study validates our analysis on eight-task model merging across three CLIP backbones and on CIL, TIL, and MTIL continual-learning benchmarks, directly confirming the revealed mechanism.
\end{itemize}

\section{Related Work}

\paragraph{Continual Learning.}
Continual learning aims to incrementally acquire knowledge from a sequence of tasks while avoiding catastrophic forgetting. Existing methods can be broadly grouped into five categories~\cite{wang2024survey,li2025libcontinual}: regularization-based methods that penalize updates to important weights~\cite{EWC}; replay-based methods that store or generate past samples~\cite{iCaRL}; optimization-based methods that project gradients onto subspaces orthogonal to old tasks~\cite{farajtabar2020}; representation-based methods that leverage stable feature representations from self-supervised or pre-trained backbones~\cite{madaan2021}; and architecture-based methods that allocate task-specific sub-networks~\cite{mallya2018}. Our work falls within the optimization-based category, but departs from prior work in that we do not introduce any task-specific projection. Interference is controlled purely by the geometry of the base optimizer.

\paragraph{Model Merging.}
Model merging combines independently fine-tuned models into a unified one directly in weight space, without additional training or data access~\cite{ilharco2022editing}. Existing solutions fall into during-merging methods, which design combination algorithms applied to already-trained checkpoints~\cite{yadav2023ties,yang2024adamerging}, and pre-merging methods, which modify fine-tuning to produce more mergeable models~\cite{ortizjimenez2023,liu2026understanding}. Our work belongs to the pre-merging category, but differs from prior work in that the improvement comes entirely from the base optimizer.

\paragraph{Muon Optimizer.}
Muon~\cite{jordan2024muon} is a recently proposed optimizer for the hidden-layer weight matrices of neural networks. At each step it orthogonalizes the gradient before applying the update, encouraging balanced gradient flow across the singular value spectrum of each weight matrix. Muon has demonstrated competitive convergence relative to AdamW in large-scale pretraining, and recent work further scales it to billion-parameter language models~\cite{liu2025muon,team2025kimi} and analyzes its theoretical properties~\cite{chen2025muon}. 
Most prior work studies Muon through the lens of optimization efficiency in single-task pretraining. Muon-OGD~\cite{lu2026muonogd}, combines Muon with orthogonal gradient projection to mitigate forgetting in LLM continual learning, but relies on an additional projection mechanism rather than the base optimizer alone, and does not consider model merging.
In this work, we take a different perspective and show that the geometry Muon imposes on updates directly bounds task interference in both continual learning and model merging. 

A more comprehensive discussion of related work is provided in Appendix.

\begin{figure*}[t]
\centering
\includegraphics[width=\linewidth]{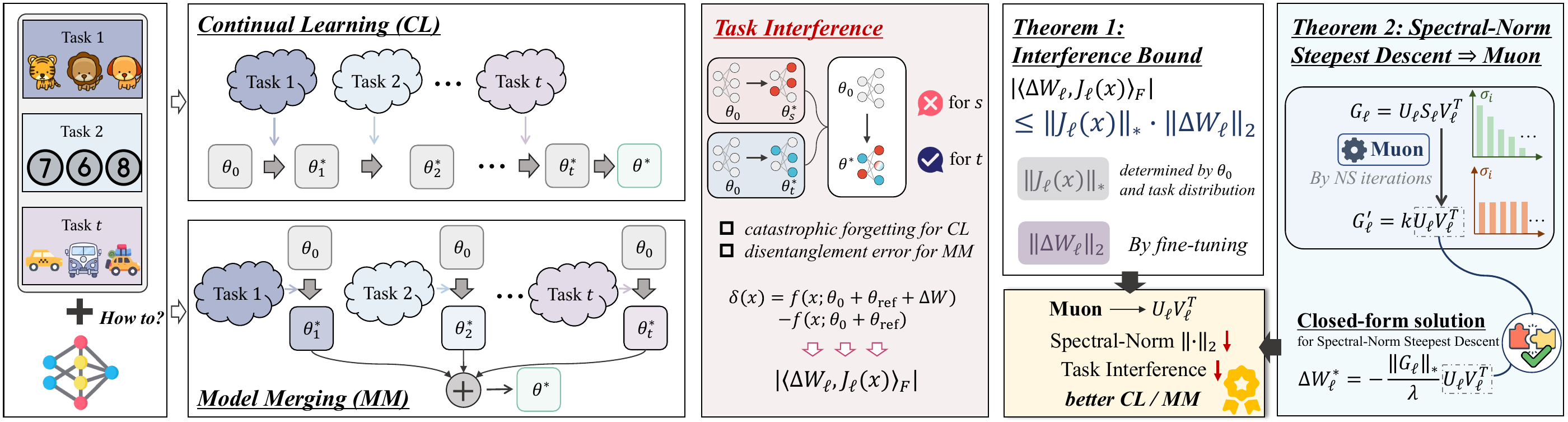}
\caption{A unified spectral framework for CL and MM. Left: Both paradigms target a shared multi-task optimum $\theta^\star$ through different protocols. Middle: A first-order Taylor expansion unifies catastrophic forgetting (CL) and disentanglement error (MM) as the same layer-wise term $\langle \Delta W_\ell, J_\ell(x)\rangle_F$, which Theorem~\ref{thm:bound} bounds by $\|J_\ell(x)\|_*\cdot\|\Delta W_\ell\|_2$ and isolates the spectral norm as an optimizer-controllable factor. Right: Theorem~\ref{thm:spectral_steepest} identifies Muon as the closed-form spectral-norm steepest descent, which uniformizes the update spectrum and thus tightens the interference bound for both paradigms.}
\label{fig:framework-full}
\vspace{-10pt}
\end{figure*}
\section{The Proposed Unified Framework}

Figure~\ref{fig:framework-full} summarizes the unified framework. We first cast CL and MM under a common objective (left), unify their difficulties as task interference (middle), derive an upper bound that isolates spectral-norm as an optimizer-controllable factor (Theorem~\ref{thm:bound}, middle-right), and finally identify Muon as the closed-form solution to spectral-norm steepest descent (Theorem~\ref{thm:spectral_steepest}, right).

\subsection{A Unified View of CL and MM}\label{subsec:unified}

\textbf{Basic Setup.} Consider a neural network $f(x;\theta)$ with parameters $\theta \in \mathbb{R}^P$ and pre-trained weights $\theta_0$. Given a sequence of $T$ tasks $\{1,\ldots,t,\ldots,T\}$ with data domains $\{\mathcal{D}_t\}$ and losses $\mathcal{L}_t(\theta) := \mathbb{E}_{(x,y)\sim\mathcal{D}_t}[\ell(f(x;\theta),y)]$, our goal is to obtain a single set of parameters that performs well on all tasks,
\begin{equation}\small
    \theta^\star \;\in\; \arg\min_{\theta}\ \sum_{t=1}^{T} \mathcal{L}_t(\theta).\label{eq:objective}
\end{equation}
Continual learning and model merging are two paradigms that approach Eq.~\eqref{eq:objective} under different practical constraints.

\textbf{Continual learning (CL)} processes the tasks sequentially, producing a chain $\theta_0 \to \theta_1^\star \to \cdots \to \theta_T^\star$ where $\theta_t^\star$ is obtained by fine-tuning from the previous checkpoint $\theta_{t-1}^\star$ under restricted access to earlier data~\cite{li2025libcontinual}. The final iterate $\theta_T^\star$ is taken as the approximation to $\theta^\star$.

\textbf{Model merging (MM)} fine-tunes each task independently from $\theta_0$ to obtain $\{\theta_t^\star\}_{t=1}^T$, and constructs a multi-task model by composing task vectors $\tau_t := \theta_t^\star - \theta_0$~\cite{ilharco2022editing}:
\begin{equation}\small
    \theta_{\text{MM}} = \theta_0 + \sum_{t=1}^T \alpha_t \tau_t,
\end{equation}
which is taken as the approximation to $\theta^\star$.

\textbf{The central challenge} in each paradigm is that subsequent updates shift the model's outputs on other tasks' domains, preventing convergence to Eq.~\eqref{eq:objective}. 

In CL, this shift accumulates across sequential fine-tuning steps and is known as \textit{catastrophic forgetting}~\cite{catastrophicmccloskey1989}. Following the formalization of~\citet{Doan2021}, after fine-tuning on task $t$, the drift on the previous task's domain $\mathcal{D}_{t-1}$ is
\begin{equation}\small
    \delta_{\text{CF}}(x) = f(x;\theta_{t-1}^\star + \Delta W) - f(x;\theta_{t-1}^\star),\ x\in\mathcal{D}_{t-1},\label{eq:cf}
\end{equation}
where $\Delta W := \theta_t^\star - \theta_{t-1}^\star$ is the update produced by fine-tuning on task $t$.

In MM, the shift arises from composing task vectors. Ideal composition would leave $\mathcal{D}_s$ unaffected by any $\tau_t\ (t\ne s)$, a property known as \textit{weight disentanglement}~\cite{ortizjimenez2023}. And the shift is precisely the departure from this property. For a task $t\ne s$, the disentanglement error on $\mathcal{D}_s$ is
\begin{equation}\small
\delta_{\text{DE}}(x) = f(x;\theta_0+\alpha_s\tau_s+\Delta W) - f(x;\theta_0+\alpha_s\tau_s),\ x\in\mathcal{D}_s,\label{eq:wd}
\end{equation}
where $\Delta W := \alpha_t\tau_t$ is the scaled task vector of task $t$.

Eqs.~\eqref{eq:cf} and~\eqref{eq:wd} both take the form $f(x;\theta_{\text{ref}}+\Delta W)-f(x;\theta_{\text{ref}})$ on a source-task domain, differing only in what $\theta_{\text{ref}}$ and $\Delta W$ represent. 
Under the NTK linearization hypothesis~\cite{NTK}, we expand $f(x;\theta)$ to first order in $\theta$ around $\theta=\theta_{\text{ref}}$. Substituting this expansion into $\delta(x)=f(x;\theta_{\text{ref}}+\Delta W)-f(x;\theta_{\text{ref}})$ and decomposing layer-wise yields
\begin{equation}\small
    \delta(x) \;\approx\; \sum_\ell \bigl\langle \Delta W_\ell,\, J_\ell(x)\bigr\rangle_F,
    \label{eq:unified}
\end{equation}
where $J_\ell(x) := \nabla_{W_\ell} f(x;\theta_{\text{ref}})$ is the layer-$\ell$ Jacobian and $\delta \in \{\delta_{\text{CF}}, \delta_{\text{DE}}\}$.

Eq.~\eqref{eq:unified} is the analytical target of this paper. $\delta_{\text{CF}}$ and $\delta_{\text{DE}}$ are governed by the same term, which we call the layer-wise \emph{task interference} $\langle \Delta W_\ell, J_\ell(x)\rangle_F$. Any mechanism that provably shrinks this term therefore mitigates forgetting and promotes disentanglement simultaneously. Building on this unification, we first derive a spectral-norm upper bound on task interference, and then show that the Muon optimizer controls this bound by construction.

\subsection{A Spectral-Norm Bound on Task Interference}\label{subsec:bound}
The layer-wise task interference $\langle \Delta W_\ell, J_\ell(x)\rangle_F$ governs both catastrophic forgetting and disentanglement error. We now derive an analytical upper bound on it.
\begin{theorem}[Layer-wise Interference Bound]
\label{thm:bound}
For any $\Delta W_\ell, J_\ell(x) \in \mathbb{R}^{m_\ell \times n_\ell}$,
\begin{equation}\small
\bigl|\langle \Delta W_\ell,\, J_\ell(x) \rangle_F\bigr|\;\le\;\|\Delta W_\ell\|_2 \cdot \|J_\ell(x)\|_*,
\label{eq:bound}
\end{equation}
where $\|\cdot\|_2$ denotes the spectral norm of a matrix and $\|\cdot\|_*$ the nuclear norm of a matrix.
\end{theorem}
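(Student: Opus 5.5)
The plan is to prove the bound by the standard duality between the spectral and nuclear norms, going through the singular value decomposition of the Jacobian. Write $J := J_\ell(x)$ and $\Delta := \Delta W_\ell$, and let $r=\min(m_\ell,n_\ell)$. Take a (thin) SVD $J=\sum_{i=1}^{r}\sigma_i u_i v_i^\top$ with orthonormal $\{u_i\}\subset\mathbb{R}^{m_\ell}$, orthonormal $\{v_i\}\subset\mathbb{R}^{n_\ell}$, and $\sigma_i\ge 0$. By definition $\|J\|_*=\sum_i\sigma_i$.

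The first step uses linearity of the Frobenius inner product together with the identity $\langle \Delta, u v^\top\rangle_F=\operatorname{tr}(\Delta^\top u v^\top)=u^\top \Delta v$. Together they give
\begin{equation*}
\langle \Delta, J\rangle_F=\sum_{i=1}^{r}\sigma_i\, u_i^\top \Delta v_i .
\end{equation*}

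The second step bounds each bilinear term. By Cauchy--Schwarz and the definition of the operator norm,
\begin{equation*}
|u_i^\top \Delta v_i|\le \|u_i\|_2\,\|\Delta v_i\|_2\le \|\Delta\|_2 ,
\end{equation*}
since $u_i$ and $v_i$ are unit vectors. The triangle inequality, using $\sigma_i\ge0$, then yields
\begin{equation*}
|\langle \Delta,J\rangle_F|\le \sum_i\sigma_i\|\Delta\|_2=\|\Delta\|_2\|J\|_* ,
\end{equation*}
which is the claimed bound \eqref{eq:bound}.

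There is no real obstacle here; the only care needed is bookkeeping. The SVD must be taken of $J$, the factor carrying the nuclear norm, and not of $\Delta W_\ell$. The argument also needs nonnegative singular values so that the absolute values pass cleanly through the sum. If desired, I would remark that the bound is tight, since equality is attained at $\Delta=c\sum_i u_i v_i^\top$, i.e. the orthogonal polar factor of $J$. This remark connects naturally to the later identification of Muon's orthogonalized update as the maximizer in this dual pairing, though it is not needed for the inequality itself.
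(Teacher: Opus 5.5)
Your proposal is correct and follows essentially the same route as the paper's proof: take the SVD of $J_\ell(x)$, expand $\langle \Delta W_\ell, J_\ell(x)\rangle_F=\sum_i \sigma_i\, u_i^\top \Delta W_\ell v_i$ via the trace identity, bound each term by $\|\Delta W_\ell\|_2$ using Cauchy--Schwarz and the operator norm, and sum using $\sigma_i\ge 0$. Your tightness remark at the polar factor also checks out, and it matches the duality argument the paper uses later for Theorem~\ref{thm:spectral_steepest}.
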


\begin{proof}[Proof sketch]
Writing $J_\ell(x) = \sum_i \sigma_i u_i v_i^\top$ and expanding the Frobenius
inner product gives $\langle \Delta W_\ell, J_\ell(x)\rangle_F= \sum_i \sigma_i \cdot u_i^\top \Delta W_\ell v_i$. Since $u_i$ and $v_i$ are unit vectors, the operator definition of the spectral norm gives $|u_i^\top \Delta W_\ell v_i| \le \|\Delta W_\ell\|_2$ for each $i$. The triangle inequality and $\|J_\ell(x)\|_* = \sum_i \sigma_i$ then yield Eq.~\eqref{eq:bound}. See Appendix for details.
\end{proof}

\begin{figure}[t]
\centering
\includegraphics[width=0.9\linewidth]{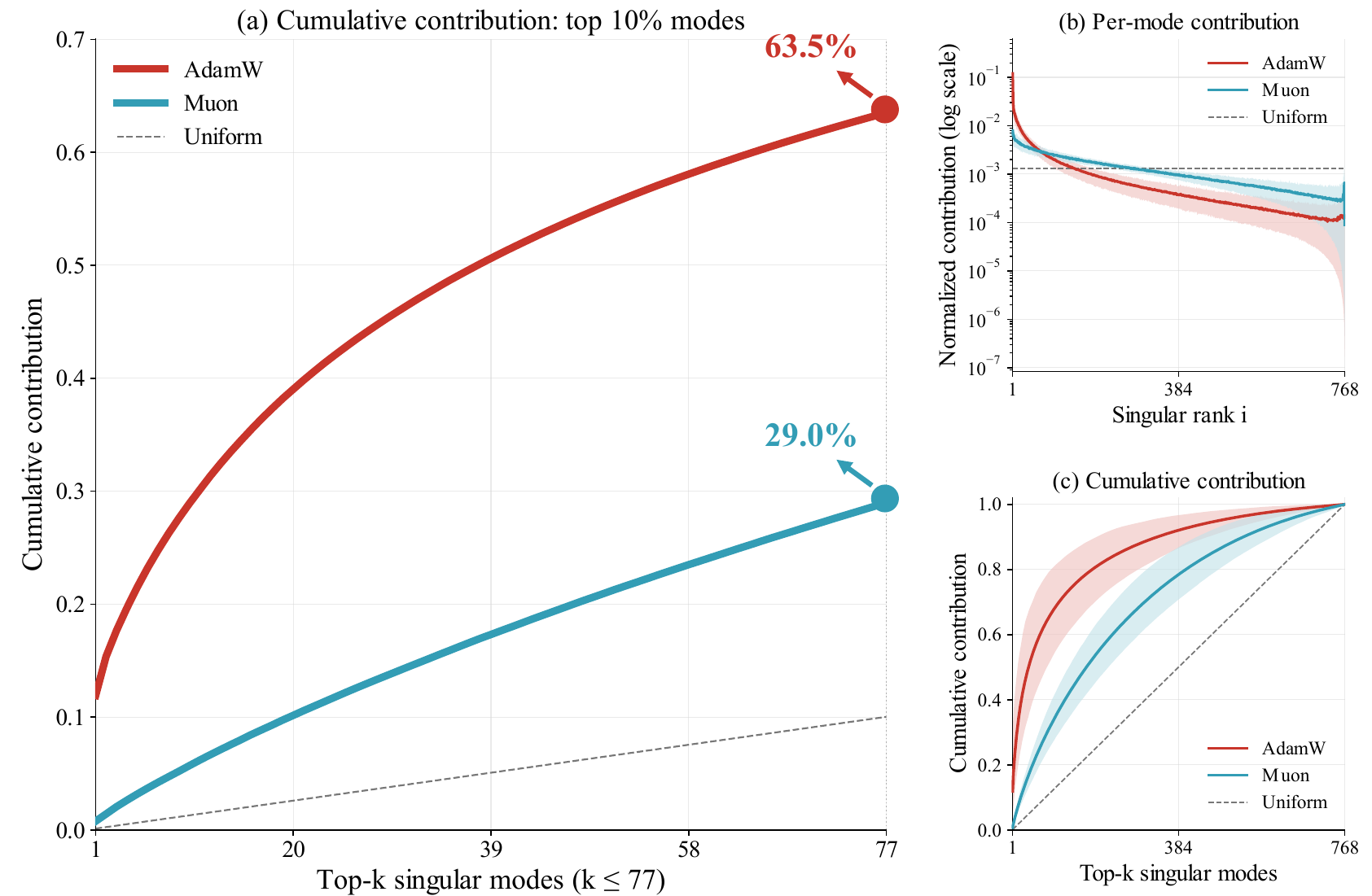}
\caption{Empirical validation of Remark~\ref{rem:tightness} on ViT-B/32. We plot the per-mode contribution of the SVD modes of $\Delta W_\ell$. (a) Cumulative contribution of the top-10\% singular modes; (b) per-mode contribution on a log scale; (c) full cumulative curve across all modes. Under AdamW, the top-10\% of modes account for $63.5\%$ of the total interference, confirming that interference is sharply concentrated on a few dominant modes. Under Muon, the same fraction is $29.0\%$ and the curve is nearly flat. This validates Remark~\ref{rem:tightness}: shrinking $\|\Delta W_\ell\|_2$ suppresses precisely the modes that dominate the true interference, so the bound of Theorem 1 tracks the actual quantity rather than merely upper-bounding it.}
\label{fig:contri}
\vspace{-15pt}
\end{figure}

Eq.~\eqref{eq:bound} decomposes task interference into two factors: one
determined by the model and the data, and the other by the optimizer. 

Strictly, $J_\ell(x)$ is evaluated at the reference point $\theta_{\text{ref}}$, which is $\theta_{t-1}^\star$ in CL and $\theta_0 + \alpha_s\tau_s$ in MM. 

However, in both paradigms fine-tuning usually keeps $\theta_{\text{ref}}$ within a bounded neighborhood of $\theta_0$ of radius $R:=\|\theta_{\text{ref}}-\theta_0\|_2$. Under a standard local-Lipschitz assumption on the Jacobian with constant $L_{J,\ell}$, Proposition~1 in Appendix gives 
\begin{equation}\small
    \mathbb E_{x\sim\mathcal D_s}\big[\|J_\ell(x;\theta_{\text{ref}})\|_*\big]\;\le\;\mathcal J_\ell(\mathcal D_s;\theta_0)\;+\;L_{J,\ell}\,R,
\end{equation}
where $\mathcal J_\ell(\mathcal D_s;\theta_0):=\mathbb E_{x\sim\mathcal D_s}\!\big[\|J_\ell(x;\theta_0)\|_*\big]$ depends only on the pre-trained backbone and the source-task distribution, $L_{J,\ell}$ is the local Lipschitz constant of the layer-$\ell$ Jacobian at $\theta_0$, and $R$ is the fine-tuning radius. None of these quantities is affected by the choice of optimizer.
The spectral norm $\|\Delta W_\ell\|_2$, in contrast, is a product of training. In both CL and MM, it is directly shaped by the choice of optimizer.

\begin{figure}[t]
\centering
\includegraphics[width=\linewidth]{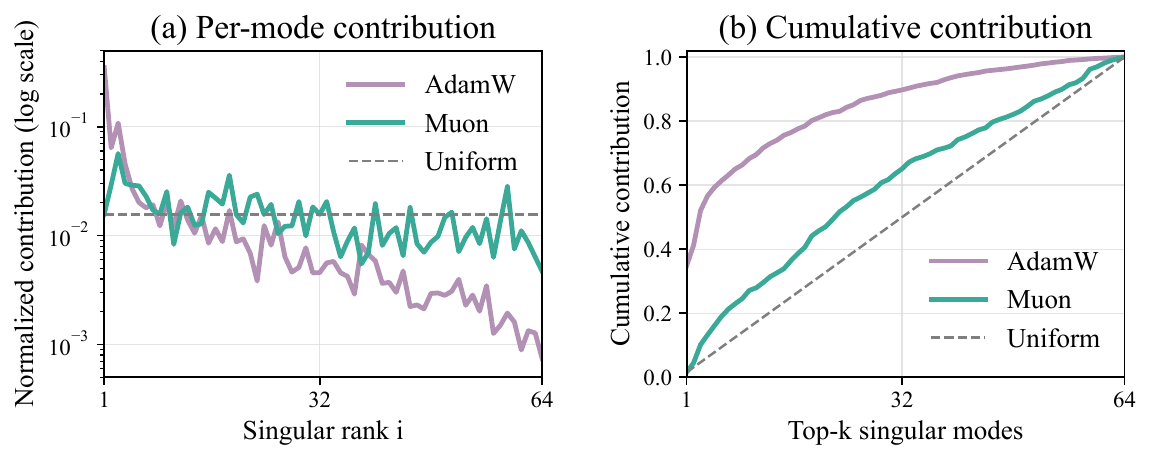}
\caption{Per-mode contribution of the MoE-Adapter updates $\Delta W$ along the MTIL sequence. (a) $\mathrm{Contrib}_i$ vs. singular rank $i$ on a log scale. (b) Cumulative contribution over the top-$k$ modes. The CL counterpart of Figure~\ref{fig:contri}.}
\label{fig:contri-cl}
\vspace{-15pt}
\end{figure}

\begin{remark}[Empirical relevance of the bound]
\label{rem:tightness}
Theorem~\ref{thm:bound} bounds the interference, but the bound itself does not indicate how much of the actual interference is captured by the modes that a small $\|\Delta W_\ell\|_2$ would constrain. To check, write the SVD of the update as $\Delta W_\ell = \sum_i s_i p_i q_i^\top$ with $s_1 \geq s_2 \geq \cdots \geq 0$, and define the per-mode contribution $\mathrm{Contrib}_i(x) \;:=\; s_i \cdot \bigl|\langle p_i q_i^\top,\, J_\ell(x)\rangle_F\bigr|$, and it is easy to prove that $|\langle \Delta W_\ell, J_\ell(x)\rangle_F| \leq \sum_i \mathrm{Contrib}_i(x)$, with equality achieved when the signed inner products share a common sign across modes. Empirically, as shown in Figures~\ref{fig:contri} and~\ref{fig:contri-cl}, for task vectors trained by AdamW, $\mathrm{Contrib}_i(x)$ concentrates sharply on the top singular modes of $\Delta W_\ell$. The top-$k$ cumulative contribution $C_k(x) := \sum_{i=1}^{k} \mathrm{Contrib}_i(x)$ grows rapidly in $k$. On ViT-B/32, the top 10\% of singular modes already account for 63.5\% of the total $\sum_i \mathrm{Contrib}_i(x)$ (Figure~\ref{fig:contri}(a)). 
An update whose singular value spectrum is flatter at a matched Frobenius scale $\|\Delta W_\ell\|_F$ places less energy on these leading modes, and this would usually yield a smaller $\sum_i \mathrm{Contrib}_i(x)$.
And Figure~\ref{fig:relative_interference} shows that replacing AdamW with Muon consistently reduces the empirical relative interference across all source tasks. Theorem~\ref{thm:bound} therefore identifies an empirically meaningful target rather than a loose upper bound.
\end{remark}

\subsection{Muon Optimizer Controls the Interference Bound}\label{subsec:muon}

Theorem~\ref{thm:bound} identifies $\|\Delta W_\ell\|_2$ as the optimizer-controllable factor in task interference. We now show that the Muon optimizer regulates this quantity by construction, by realizing steepest descent under the spectral norm.

\begin{theorem}[Spectral-norm steepest descent~\cite{bernstein2024}]
\label{thm:spectral_steepest}
Let $G_\ell \in \mathbb{R}^{d_\ell^{\mathrm{out}} \times d_\ell^{\mathrm{in}}}$ have compact SVD $G_\ell = U_\ell \Sigma_\ell V_\ell^\top$ with $r = \mathrm{rank}(G_\ell)$. For any sharpness $\lambda > 0$, the spectral-norm steepest-descent problem
\begin{equation}\small
    \Delta W_\ell^\star = \arg\min_{\Delta W_\ell}\left[\,\langle G_\ell,\, \Delta W_\ell \rangle_F+ \tfrac{\lambda}{2}\|\Delta W_\ell\|_2^2\,\right]
\end{equation}
admits the closed-form solution
\begin{equation}\small
    \Delta W_\ell^\star\;=\;-\,\frac{\|G_\ell\|_*}{\lambda}\,\cdot\, U_\ell V_\ell^\top.
    \label{eq:spectral_steepest}
\end{equation}
\end{theorem}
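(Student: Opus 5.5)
We split the minimization into a choice of magnitude and a choice of direction. Every nonzero $\Delta W_\ell$ can be written as $\Delta W_\ell = c\,D$ with $c = \|\Delta W_\ell\|_2 \ge 0$ and $\|D\|_2 = 1$. The objective then becomes
\[
c\,\langle G_\ell, D\rangle_F + \tfrac{\lambda}{2}c^2 .
\]
For a fixed $c$, we first minimize over directions: we want $\min_{\|D\|_2\le 1}\langle G_\ell, D\rangle_F$. Once that inner problem is solved, the remaining problem in $c$ is a one-dimensional quadratic.

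\textbf{Inner problem.} We reuse the duality argument from the proof of Theorem~\ref{thm:bound}, with the roles of the two matrices exchanged. Theorem~\ref{thm:bound} gives
\[
|\langle D, G_\ell\rangle_F| \le \|D\|_2\,\|G_\ell\|_* \le \|G_\ell\|_* ,
\]
so $\langle G_\ell, D\rangle_F \ge -\|G_\ell\|_*$ for every feasible $D$. The bound is attained at $D = -U_\ell V_\ell^\top$. This matrix has spectral norm $1$, since its nonzero singular values all equal $1$ because $U_\ell$ and $V_\ell$ have orthonormal columns. Moreover,
\[
\langle G_\ell, U_\ell V_\ell^\top\rangle_F = \mathrm{tr}(V_\ell \Sigma_\ell U_\ell^\top U_\ell V_\ell^\top) = \mathrm{tr}(\Sigma_\ell) = \|G_\ell\|_* .
\]
Hence the inner minimum equals $-\|G_\ell\|_*$ and is achieved at $D = -U_\ell V_\ell^\top$.

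\textbf{Outer problem.} Substituting the inner minimum, we minimize $-c\|G_\ell\|_* + \tfrac{\lambda}{2}c^2$ over $c\ge 0$. Its minimizer is $c^\star = \|G_\ell\|_*/\lambda$. Combining $c^\star$ with the optimal direction gives Eq.~\eqref{eq:spectral_steepest}. To make the splitting rigorous, we write the objective for an arbitrary $\Delta W_\ell$ and bound it below:
\[
\langle G_\ell, \Delta W_\ell\rangle_F + \tfrac{\lambda}{2}\|\Delta W_\ell\|_2^2 \;\ge\; -\|G_\ell\|_*\|\Delta W_\ell\|_2 + \tfrac{\lambda}{2}\|\Delta W_\ell\|_2^2 \;\ge\; -\tfrac{\|G_\ell\|_*^2}{2\lambda}.
\]
The first step uses Theorem~\ref{thm:bound}, and the second minimizes the scalar quadratic. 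The proposed $\Delta W_\ell^\star$ attains this lower bound, so it is a global minimizer. The case $G_\ell = 0$ is trivial: the objective is minimized at $\Delta W_\ell = 0$, which agrees with the formula.

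\textbf{Main obstacle: uniqueness.} Interpreting ``$\arg\min$'' as the stated point requires some care. When $G_\ell$ is rank-deficient, the minimizer is generally \emph{not} unique. One can add to $-U_\ell V_\ell^\top$ any block acting on the orthogonal complements of the column spaces of $U_\ell$ and $V_\ell$ with spectral norm at most $1$. Such a block leaves both the inner product and the spectral norm unchanged. I would therefore state the result as ``$\Delta W_\ell^\star$ is a minimizer'', and note that it is the unique minimum-Frobenius-norm minimizer, i.e.\ the one supported on the row and column spaces of $G_\ell$. Uniqueness within that subspace follows from the equality case of Theorem~\ref{thm:bound}: equality forces $u_i^\top D v_i = -1$ for every $i$, and hence $D v_i = -u_i$, as in the proof sketch of that theorem.
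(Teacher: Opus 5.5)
Your proof is correct and follows essentially the same route as the paper's appendix: you split $\Delta W_\ell$ into a magnitude and a unit-spectral-norm direction, solve the direction subproblem by spectral--nuclear duality (attained at $U_\ell V_\ell^\top$), and then minimize the scalar quadratic to obtain $c^\star=\|G_\ell\|_*/\lambda$. Your explicit global lower bound $-\|G_\ell\|_*^2/(2\lambda)$ makes the splitting fully rigorous. Your uniqueness remark is consistent with the paper's ``unique up to the null space of $G$'' caveat and slightly sharper: the minimizer is unique only for full-rank $G_\ell$, and otherwise it is the unique minimizer supported on the row and column spaces of $G_\ell$.
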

\begin{proof}
See Appendix.
\end{proof}

\begin{table*}[t]\small
\centering
\setlength{\tabcolsep}{4.5pt}
\begin{tabular}{lcccccc}
\toprule
& \multicolumn{2}{c}{ViT-B/32, 8 tasks}
& \multicolumn{2}{c}{ViT-B/16, 8 tasks}
& \multicolumn{2}{c}{ViT-L/14, 8 tasks} \\
\cmidrule(lr){2-3}\cmidrule(lr){4-5}\cmidrule(lr){6-7}
Method & Abs.Acc. $\uparrow$ & Norm.Acc. $\uparrow$ & Abs.Acc. $\uparrow$ & Norm.Acc. $\uparrow$ & Abs.Acc. $\uparrow$ & Norm.Acc. $\uparrow$ \\
\midrule
Zero-shot
    & 47.74 & -- & 54.22 & -- & 64.54 & -- \\
\midrule
Non-linear FT + AdamW  & 70.32 & 77.56 & 75.39 & 75.39 & 84.07 & 89.19 \\
Non-linear FT + Muon   & 75.34 & 82.79 & 78.81 & 84.64 & 87.89 & 93.26 \\
\quad \textit{Gain}     & \textbf{+5.02} & \textbf{+5.23} & \textbf{+3.42} & \textbf{+9.25} & \textbf{+3.82} & \textbf{+4.07} \\
\midrule
TTA + AdamW & 74.68 & 85.27 & 78.97 & 87.48 & 86.19 & 93.14 \\
TTA + Muon  & 77.41 & 87.39 & 80.72    & 87.98    & 88.03    & 93.78    \\
\quad \textit{Gain}     & \textbf{+2.73} & \textbf{+2.12} & \textbf{+1.75}    & \textbf{+0.50}    & \textbf{+1.84}    & \textbf{+0.64}    \\
\midrule
OrthoReg + AdamW & 73.41 & 93.93 & 77.68 & 93.62 & 88.23 & 100.08 \\
OrthoReg + Muon  & 78.05 & 102.87 & 82.21 & 102.10 & 89.83 & 107.35 \\
\quad \textit{Gain}     & \textbf{+4.64} & \textbf{+8.94} & \textbf{+4.53} & \textbf{+8.48} & \textbf{+1.60} & \textbf{+7.27} \\
\bottomrule
\end{tabular}
\vspace{-5pt}
\caption{Eight-task addition on CLIP backbones. Absolute Accuracy (Abs.Acc.) and Normalized Accuracy (Norm.Acc.) are reported. ``+ Muon'' denotes optimizing all matrix-valued parameters with Muon and the remaining parameters with auxiliary AdamW. Across all three backbones, replacing AdamW with Muon improves both metrics under every pre-merging strategy.}\label{tab:mm_main}
\vspace{-10pt}
\end{table*}

The update direction $U_\ell V_\ell^\top$ points along the singular subspaces of $G_\ell$, with all singular values normalized to one~\cite{bernstein2024}.
The Muon optimizer realizes the spectral-norm steepest descent of Theorem~\ref{thm:spectral_steepest} by a purely numerical route. At each step it smooths the gradient with Nesterov momentum, Frobenius-normalizes the result, and applies $K$ Newton--Schulz iterations, whose terminal iterate $X_K$ approximates the polar factor $U_\ell V_\ell^\top$ (see Appendix and~\citet{jordan2024muon} for details). The resulting update $\Delta W_\ell \;=\; -\eta \cdot X_K$ inherits the spectral geometry of $U_\ell V_\ell^\top$: its singular values are uniformly equal to $\eta$, regardless of how the singular energy of $G_\ell$ is distributed. 

Substituting this into Theorem~\ref{thm:bound} suppresses the interference bound at every layer, uniformly across the CL update in Eq.~\eqref{eq:cf} and the scaled task-vector contribution in Eq.~\eqref{eq:wd}. 
Because Muon's update direction $U_\ell V_\ell^\top$ matches the spectral-norm steepest-descent direction of Theorem~\ref{thm:spectral_steepest}, Muon descends in precisely the norm $\|\cdot\|_2$ that governs the Theorem~\ref{thm:bound} bound. As an optimizer, it therefore tightens this bound by construction, since the update geometry it enforces is exactly the geometry that the bound measures.
We conclude that Muon is inherently better suited to both continual learning and model merging than optimizers that leave the update spectrum unconstrained. A single spectral-norm control simultaneously mitigates catastrophic forgetting in CL (Eq.~\eqref{eq:cf}) and reduces disentanglement error in MM (Eq.~\eqref{eq:wd}).
\section{Experiments on Model Merging}\label{sec:exp_model_merging}

\begin{figure}[t]
\centering
\includegraphics[width=\linewidth]{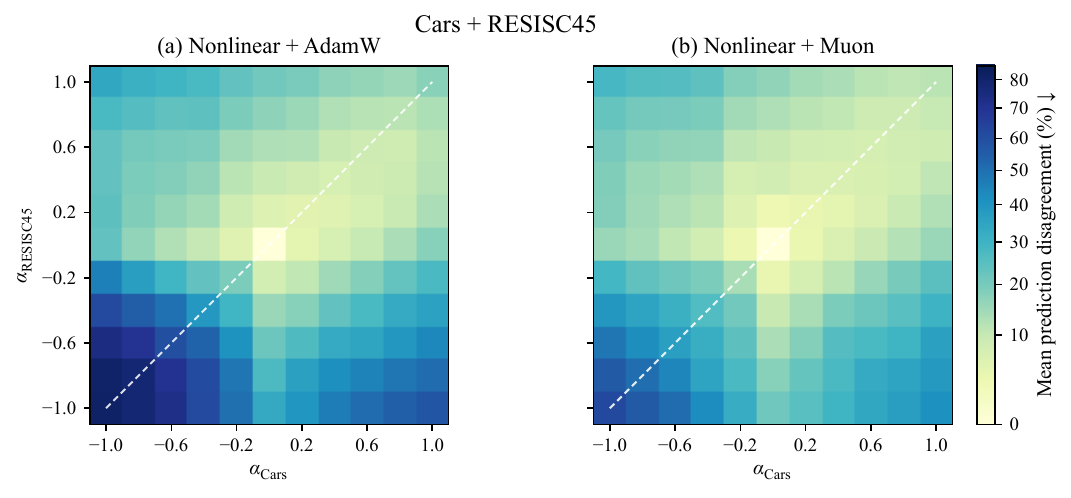}
\caption{ Weight-disentanglement heatmap for Cars + RESISC45. Each cell reports the prediction disagreement (\%) and lower is better. (a) AdamW: low-disagreement region narrowly confined near the origin. (b) Muon: the same region expands in both axes. This matches the reduction in interfence predicted by Theorem~\ref{thm:bound}.}
\label{fig:wd}
\vspace{-15pt}
\end{figure}

\subsection{Experimental Setup}

\textbf{Datasets and backbones.}
We follow the standard eight-task addition protocol of Model Merging~\cite{ortizjimenez2023,liu2026understanding} on the CLIP-pretrained ViT-B/32, ViT-B/16, and ViT-L/14 backbones, respectively.

\noindent\textbf{Muon implementation.}
Muon is applied to the image encoder's matrix parameters, with an auxiliary AdamW handling the remaining parameters, following the standard protocol~\citep{jordan2024muon}.

\noindent\textbf{Baselines.}
We compare Muon against three widely used pre-merging strategies, each paired with both AdamW and Muon (denoted by the ``$+$ Muon'' suffix): Non-linear Fine-tuning~\cite{ilharco2022editing}, Tangent Task Arithmetic (TTA)~\cite{ortizjimenez2023}, and OrthoReg~\cite{liu2026understanding}.

\noindent\textbf{Evaluation metrics.}
Consistent with~\cite{ilharco2022editing}, we report Absolute Accuracy (Abs.Acc.), and Normalized Accuracy (Norm.Acc.). A single coefficient $\alpha$ (\textit{i.e.}, $\alpha_t=\alpha$ for all $t$ in Eq.~\ref{eq:wd})) is applied to $\sum_t \tau_t$ and selected on the validation sets.

A full description of the datasets, fine-tuning protocol, baselines, and evaluation details is provided in Appendix.

\begin{figure}[t]
\centering

\includegraphics[width=\linewidth]{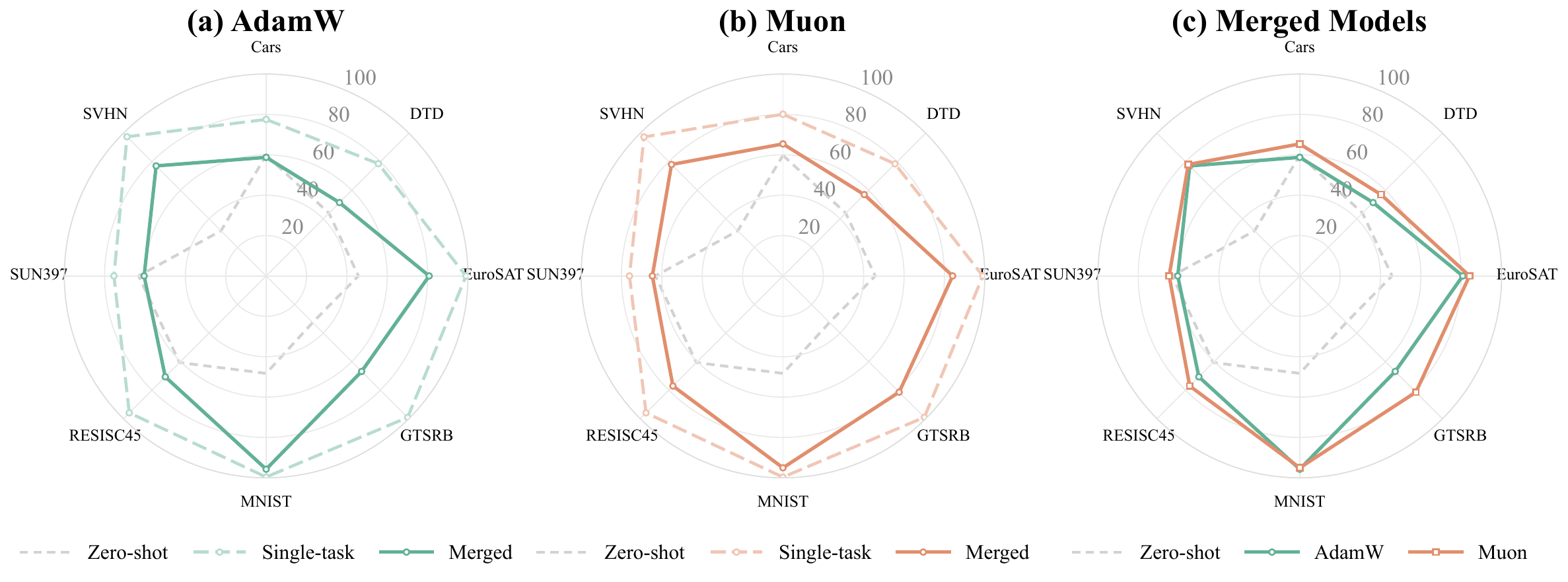}
\caption{Per-task accuracy on ViT-B/32 under AdamW and Muon. (a, b) Grey dashed: zero-shot backbone. Light dashed: accuracy of each independently fine-tuned expert on its own task. Bold solid: accuracy of the merged model. (c) Direct comparison of the two merged models. Single-task accuracies are nearly identical between optimizers, but the merged accuracies expand substantially under Muon.}
\label{fig:radar_per}
\vspace{-10pt}
\end{figure}

\subsection{Performance and Analysis}
\noindent\textbf{Overall performance.}
Table~\ref{tab:mm_main} reports the task addition results. For non-linear fine-tuning, replacing AdamW with Muon improves Abs.~Acc by $+5.02$, $+3.42$, and $+3.82$ points on ViT-B/32, ViT-B/16, and ViT-L/14, respectively. Norm.Acc. shows similar gains.
The improvement carries over when Muon is combined with existing pre-merging methods: OrthoReg + Muon gives the best result on every backbone ($78.05\%$, $82.21\%$, and $89.83\%$ Abs.Acc.), and TTA + Muon improves over TTA + AdamW by $+2.73$ points on ViT-B/32. The gain is therefore not tied to a specific fine-tuning parameterization or a specific regularizer. It comes from the optimizer itself, and it holds across model scales and across pre-merging strategies, in line with Theorem~\ref{thm:bound}.

\begin{figure}[t]
\centering
\includegraphics[width=\linewidth]{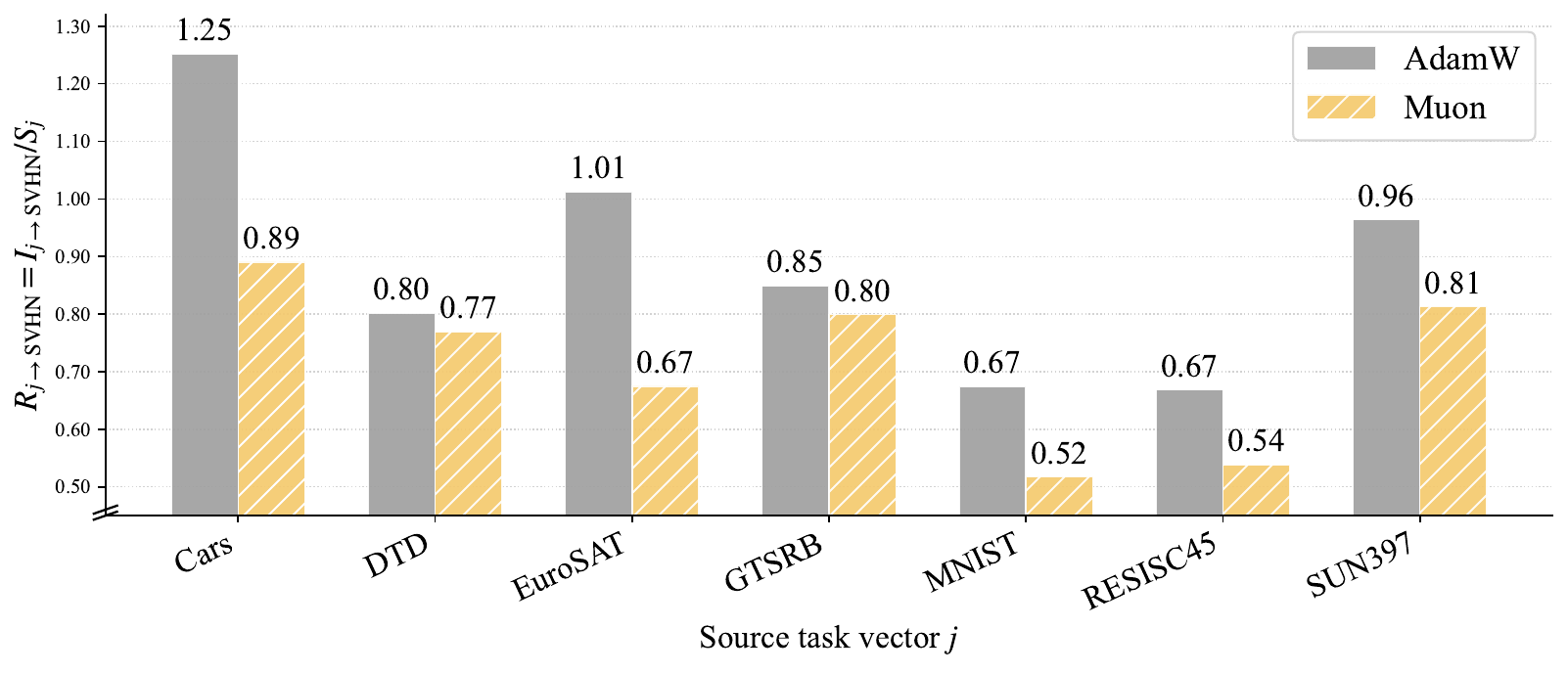}

\caption{Relative interference $R_{j \to \text{SVHN}} = I_{j \to \text{SVHN}} / S_j$ on ViT-B/32 for each source task vector $j$. Lower is better. Muon consistently reduces relative interference.}
\label{fig:relative_interference}
\vspace{-15pt}
\end{figure}

\noindent\textbf{Per-Task performance analysis.}
Muon does not produce stronger single-task experts; it produces more mergeable task vectors. 
As shown in Figure~\ref{fig:radar_per}, on ViT-B/32, the average accuracy of the eight independently fine-tuned experts on their own tasks differs by only $0.23$ points between AdamW and Muon, while after merging, the absolute-accuracy gap widens to $+5.02$ points. Almost the entire improvement comes from the merging step, not from fine-tuning. 

This is consistent with Theorem~\ref{thm:bound}: Muon controls $\|\Delta W_\ell\|_2$ and reduces cross-task interference at merging.

\noindent\textbf{Weight disentanglement.} 
We now verify whether Muon produces more disentangled task vectors in the sense of~\citet{ortizjimenez2023}. Following their protocol, we sweep $(\alpha_i, \alpha_j)$ and, at each point, measure the mean prediction disagreement.

Lower disagreement indicates better disentanglement.
Figure~\ref{fig:wd} shows the result on Cars + RESISC45 (ViT-B/32). Under AdamW the low-disagreement region is confined to a narrow band near the origin. Under Muon it expands substantially along both axes. This is exactly the disentanglement error $\delta_{\text{DE}}$ that Eq.~\eqref{eq:wd} formalizes and Theorem~\ref{thm:bound} upper-bounds: shrinking $\|\Delta W_\ell\|_2$ shrinks $\delta_{\text{DE}}$, so the merging is better.

\begin{figure}[t]
\centering
\includegraphics[width=0.9\linewidth]{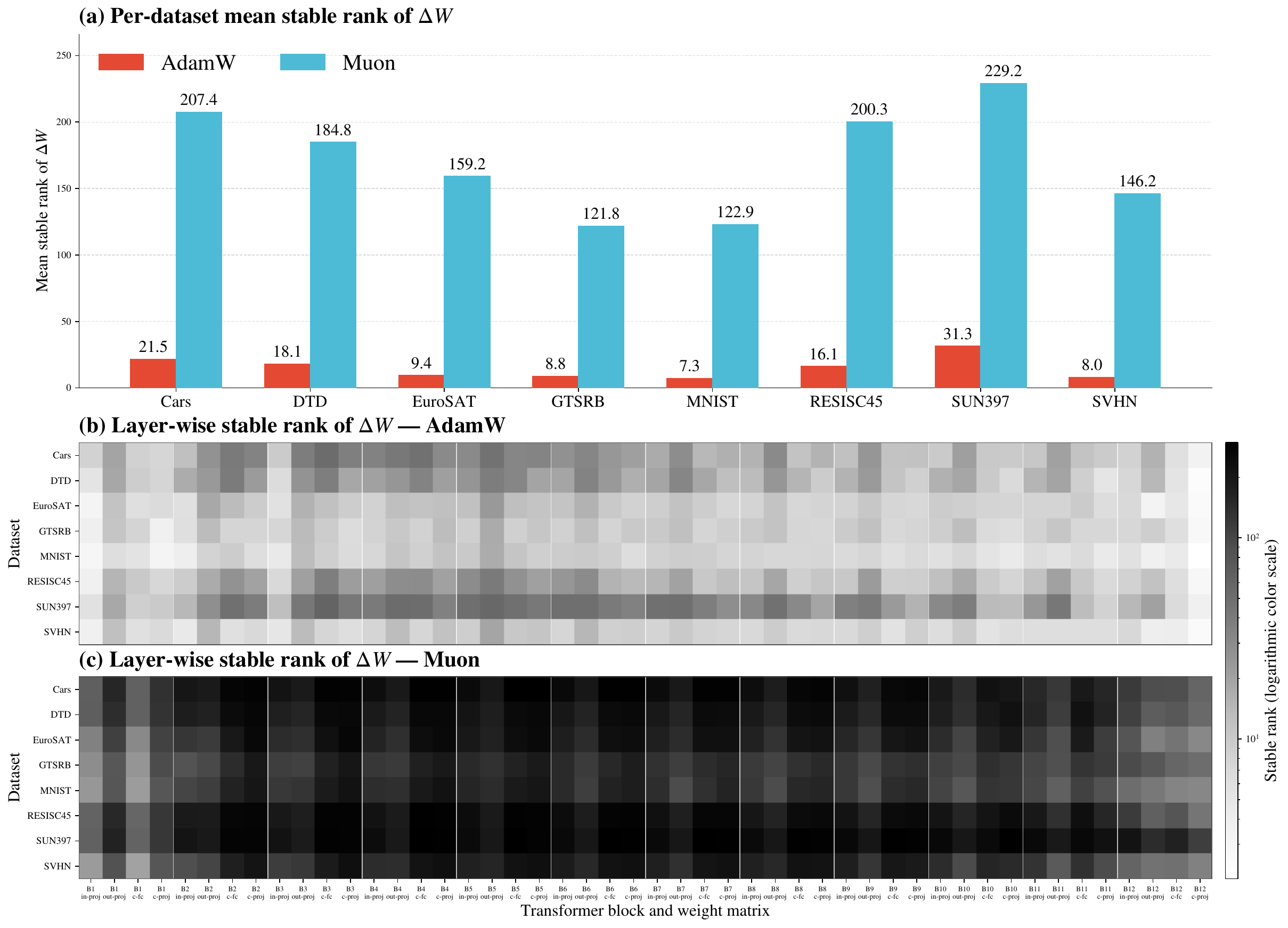}
\caption{Stable rank of $\Delta W$ on CLIP ViT-B/32 under AdamW and Muon. (a) Per-dataset mean stable rank. Muon increases the mean by roughly $11\times$ on every task. (b, c) Layer-wise stable rank for every matrix type of all blocks, under AdamW and Muon respectively.}
\label{fig:stable_rank}
\vspace{-15pt}
\end{figure}

\noindent\textbf{Task interference.} 
We now probe the layer-summed interference term $\sum_\ell \langle \Delta W_\ell, J_\ell(x)\rangle_F$ that Eq.~\eqref{eq:unified} formalizes and Theorem~\ref{thm:bound} upper-bounds. In the MM setting $\Delta W = \alpha_j \tau_j$ (Eq. (4)), and under the NTK linearization this sum equals the model-level product $\tau_j^\top J(x)$ up to the scaling $\alpha_j$. 
For each source task $j$ we therefore measure the self-effect on its own domain, $S_j = \mathbb{E}_{x \sim \mathcal{D}_j}[\|\tau_j^\top J(x)\|_2]$, and the cross-task interference on a target $t \neq j$, $I_{j \to t} = \mathbb{E}_{x \sim \mathcal{D}_t}[\|\tau_j^\top J(x)\|_2]$. The relative interference $R_{j \to t} = I_{j \to t}/S_j$ quantifies how much of $\tau_j$'s functional effect leaks onto task $t$.
Figure~\ref{fig:relative_interference} reports $R_{j \to \text{SVHN}}$ on ViT-B/32. Muon reduces the relative interference on every source task, with the largest drops on EuroSAT (1.01 $\to$ 0.67), directly validating Theorem~\ref{thm:bound}.

\subsection{Why Muon Merges Better: A Spectral View}
\noindent\textbf{Mode-wise contribution to task interference.}
Figure~\ref{fig:contri} reports the per-mode contribution $\text{Contrib}_i(x)$. Under AdamW the top-10\% of singular modes account for $63.5\%$ of the total interference. Under Muon the same fraction is only $29.0\%$ and the curve is nearly flat. 

This confirms Remark~\ref{rem:tightness}: for AdamW the per-mode contribution concentrates precisely on the leading singular modes. These are exactly the modes suppressed by Muon's spectrum flattening at the matched Frobenius scale. Combined with the direct relative-interference reduction reported in Figure~\ref{fig:relative_interference}, this shows that shrinking the bound of Theorem~\ref{thm:bound} reduces the empirical interference itself.

\noindent\textbf{Stable rank of the update.} For a matrix $A$ with singular values $\sigma_1 \ge \sigma_2 \ge \cdots \geq 0$, we investigate the stable rank~\cite{Rudelson2007} $\mathrm{srank}(A) \;:=\; \|A\|_F^2 \,/\, \|A\|_2^2$. By Theorem~\ref{thm:spectral_steepest}, the spectral-norm steepest descent solution has uniform singular values and thus substantially larger stable rank than the updates from spectrum-unconstrained optimizers. Muon, which approximates this solution, should therefore inherit this property.
As shown in Figure~\ref{fig:stable_rank}, the per-dataset mean of stable ranks rises from $15.1$ under AdamW to $171.5$ under Muon (about $11\times$, panel (a)), and the layer-wise maps (panels (b), (c)) show this increase is uniform across all blocks.
Since $\|\Delta W_\ell\|_2 = \|\Delta W_\ell\|_F / \sqrt{\mathrm{srank}(\Delta W_\ell)}$, higher stable rank at a given Frobenius scale forces a smaller spectral norm. The gap in Figure~\ref{fig:stable_rank} thus underlies the reductions in relative interference (Figure~\ref{fig:relative_interference}), disentanglement error (Figure~\ref{fig:wd}), and merged-model accuracy (Table~\ref{tab:mm_main}).
\section{Experiments on Continual Learning}\label{sec:exp_cl}
\subsection{Experimental Setup}

\begin{table}[t]
\centering
\small
\setlength{\tabcolsep}{4pt}
\begin{tabular}{llccccccc}
\toprule
& & & \multicolumn{2}{c}{AdamW} & \multicolumn{2}{c}{Muon} & \multicolumn{2}{c}{Gain} \\
\cmidrule(lr){4-5} \cmidrule(lr){6-7} \cmidrule(lr){8-9}
 & \textit{B} & Tasks & Avg.\,$\uparrow$ & Last\,$\uparrow$ & Avg.\,$\uparrow$ & Last\,$\uparrow$ & Avg. & Last \\
\midrule
\multirow{3}{*}{CIL}
 & \textit{I} & 10 & 79.43 & 71.22 & 81.55 & 75.12 & \textbf{+2.12} & \textbf{+3.90} \\
 & \textit{I} & 20 & 71.04 & 64.92 & 75.19 & 68.43 & \textbf{+4.15} & \textbf{+3.51} \\
 & \textit{C}  & 50 & 62.04 & 47.87 & 64.39 & 51.01 & \textbf{+2.35} & \textbf{+3.14} \\
\midrule
\multirow{3}{*}{TIL}
 & \textit{C}  &  5 & 93.92 & 91.81 & 95.61 & 93.45 &\textbf{ +1.69}          & \textbf{+1.64} \\
 & \textit{C}  & 10 & 91.68 & 90.15 & 95.07 & 91.94 & \textbf{+3.39} & \textbf{+1.79} \\
 & \textit{C}  & 20 & 91.81 & 90.44 & 96.07 & 93.64 & \textbf{+4.26} & \textbf{+3.20} \\
\bottomrule
\end{tabular}

\caption{LoRA baseline with AdamW vs.\ Muon on CIL and TIL protocols. Muon uniformly improves both metrics. \textit{B} (benchmark): \textit{I} = ImageNet-R, \textit{C} = CIFAR-100.}\label{tab:lora_cl}

\end{table}

\textbf{Choice of base method.} 
We evaluate Muon on two baselines that differ sharply in how much interference their design already suppresses. Plain LoRA fine-tuning inserts low-rank adapters into the frozen CLIP image encoder without any explicit CL mechanism, so any change in retention is attributed to the update geometry alone. MoE-Adapters4CL~\cite{moeadapter2024} is a strong parameter-efficient framework that already suppresses interference. Both baselines are trained on matrix-valued adapter parameters, so "+ Muon" denotes exactly the same architecture, data order, and training budget as the AdamW baseline, with only the optimizer changed. Full protocol details are in Appendix.

\noindent\textbf{Benchmarks and protocols.}
For task-incremental learning (TIL), we adopt the standard split of CIFAR-100 and ImageNet-R into 5/10/20 disjoint tasks. For class-incremental learning (CIL), we use CIFAR-100 (10/20/50 tasks), TinyImageNet (100-class base task plus 5/10/20 incremental steps), and ImageNet-R (10 tasks $\times$ 20 classes), reporting cumulative top-1 accuracy averaged over all stages (\emph{Avg}) and after the final task (\emph{Last}). For multi-domain task-incremental learning (MTIL), we follow the 11 sequential datasets proposed by~\citet{ZSCL} and report \emph{Transfer}, \emph{Average}, and \emph{Last} metrics. Full benchmark definitions and metric formulas are given in Appendix.

\begin{table}[t]
\small
\centering
\setlength{\tabcolsep}{3.5pt}
\renewcommand{\arraystretch}{0.95}
\begin{tabular}{@{}llcccc@{}}
\toprule
\multirow{2}{*}{Benchmark}
& \multirow{2}{*}{Protocol}
& \multicolumn{2}{c}{AdamW}
& \multicolumn{2}{c}{Muon} \\
\cmidrule(lr){3-4}
\cmidrule(lr){5-6}
& & Avg. $\uparrow$ & Last $\uparrow$
  & Avg. $\uparrow$ & Last $\uparrow$ \\
\midrule
\multirow{3}{*}{CIFAR-100}
& 10 tasks & 85.30 & 77.56 & \textbf{85.98} & \textbf{78.82} \\
& 20 tasks & 84.36 & 76.61 & \textbf{84.87} & \textbf{77.27} \\
& 50 tasks & 83.39 & 73.76 & \textbf{83.84} & \textbf{74.02} \\
\midrule
\multirow{3}{*}{TinyImageNet}
& 5 steps  & 80.85 & 77.04 & \textbf{81.62} & \textbf{77.57} \\
& 10 steps & 80.26 & 75.79 & \textbf{81.14} & \textbf{76.75} \\
& 20 steps & 79.98 & 75.68 & \textbf{80.67} & \textbf{75.86} \\
\midrule
ImageNet-R
& $10{\times}20$
& 87.13 & 82.40
& \textbf{87.92} & \textbf{83.27} \\
\bottomrule
\end{tabular}
\vspace{-5pt}
\caption{CIL results on MoE-Adapters4CL. Replacing AdamW with Muon improves both Avg. and Last across all seven protocols.}
\vspace{-10pt}
\label{tab:cl_moe_cil}
\end{table}

\begin{table}[t]
\small
\centering
\setlength{\tabcolsep}{7pt}
\begin{tabular}{lccc}
\toprule
Method & Transfer $\uparrow$ & Average $\uparrow$ & Last $\uparrow$ \\
\midrule
AdamW & 66.4 & 76.9 & 86.3 \\
Muon  & \textbf{67.1} & \textbf{78.3} & \textbf{87.9} \\
\bottomrule
\end{tabular}
\vspace{-5pt}
\caption{11-task MTIL evaluation on MoE-Adapters4CL. Replacing AdamW with Muon improves Transfer, Average,
and Last by $+0.7$, $+1.4$, and $+1.6$ points, respectively. Complete per-task results are provided in
Appendix Table.}
\label{tab:cl_mtil}
\vspace{-5pt}
\end{table}

\subsection{Performance and Analysis}

\textbf{LoRA baseline.} As shown in Table~\ref{tab:lora_cl}, on plain LoRA, which contains no interference-mitigation mechanism, replacing AdamW with Muon improves Avg. by +4.15 on ImageNet-R (20-task CIL) and +4.26 on CIFAR-100 (20-task TIL). The improvement is uniformly positive across every task horizon we test and grows with sequence length on CIFAR-100 ($+1.69 \rightarrow +3.39 \rightarrow +4.26$ for 5/10/20 tasks), directly confirming the mechanism of Theorem~\ref{thm:bound} in the CL setting.

\noindent\textbf{MoE-Adapters4CL baseline.} MoE-Adapters4CL is a mature CL framework that already suppresses interference. Even so, replacing AdamW with Muon improves both Avg and Last in every one of the seven CIL protocols on CIFAR-100, TinyImageNet, and ImageNet-R (Table~\ref{tab:cl_moe_cil}). The benefit is uniformly positive and persists from short to long sequences. On the 11-task MTIL benchmark (Table~\ref{tab:cl_mtil}), Muon improves all three metrics over AdamW. Complete results and comparisons against additional baselines are provided in Appendix.

\begin{figure}[t]
\centering
\includegraphics[width=\linewidth]{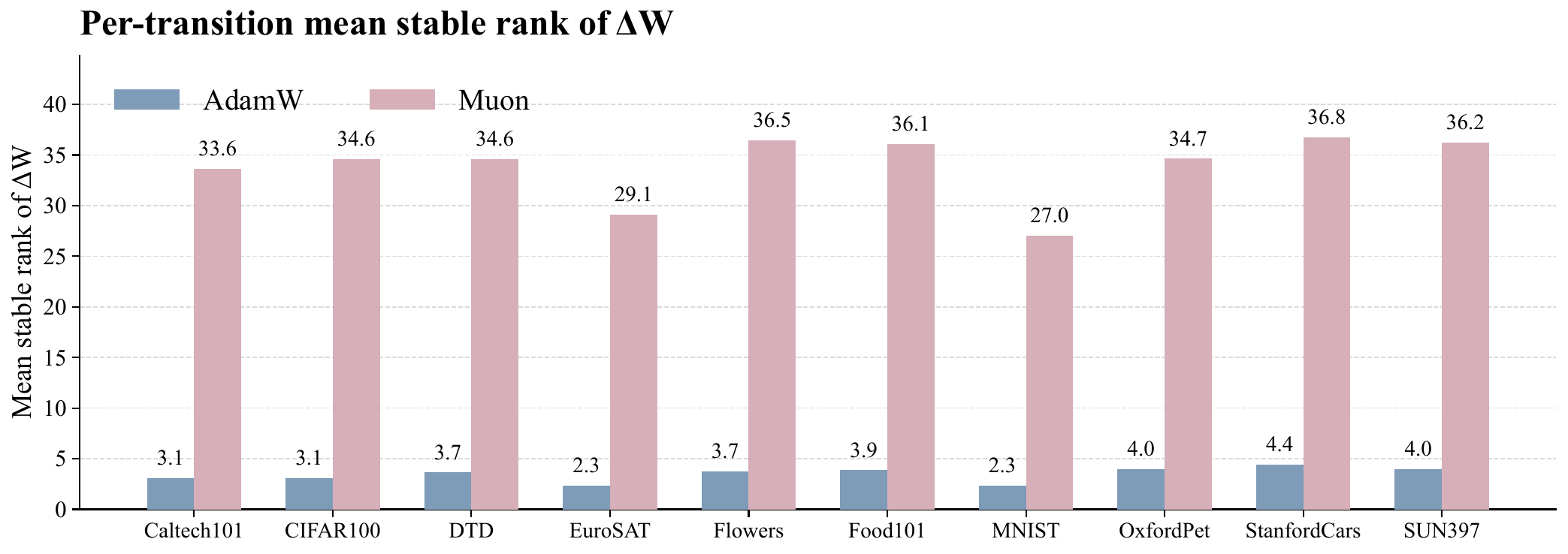}
\caption{Mean stable rank of $\Delta W$ on the MTIL sequence. AdamW: $[2.3, 4.4]$ across all datasets; Muon: $[27.0, 36.8]$, roughly $10\times$ higher and uniform across datasets. Higher stable rank at fixed Frobenius scale forces a smaller spectral norm and a tighter Theorem~\ref{thm:bound} bound at every transition.}
\label{fig:stable_rank-cl}

\end{figure}

\noindent\textbf{Mode-wise contribution.} Figure~\ref{fig:contri-cl} aggregates $\mathrm{Contrib}_i(x)$ over all transitions of the MTIL sequence. Under AdamW the per-mode contribution decays sharply and concentrates on the leading modes. Under Muon it stays flat. This is the empirical premise of Remark~\ref{rem:tightness} instantiated in the CL setting.

\noindent \textbf{Stable rank.} Figure~\ref{fig:stable_rank-cl} reports $\mathrm{srank}(\Delta W)$ averaged over transitions per source dataset. Muon raises the stable rank by roughly $10\times$. Since $\|\Delta W_\ell\|_2 = \|\Delta W_\ell\|_F / \sqrt{\mathrm{srank}(\Delta W_\ell)}$, the same Frobenius energy is spread over $10\times$ more effective modes, directly tightening the Theorem~\ref{thm:bound} bound.

\section{Conclusion}

We unified catastrophic forgetting in continual learning and weight-disentanglement error in model merging as two instances of the same task interference $\langle \Delta W_\ell, J_\ell(x)\rangle_F$, and bounded it by the product of a nuclear norm fixed by the backbone and data, and an optimizer-controllable spectral norm $\|\Delta W_\ell\|_2$. We show that Muon, as spectral-norm steepest descent, keeps this factor uniformly small by construction, tightening the bound for both paradigms through a single change of optimizer. Empirically, replacing AdamW with Muon yields uniformly positive gains on both continual learning and model merging. 
For future work, we plan to extend the interference bound beyond the NTK regime, and to validate the spectral-norm mechanism on billion-parameter foundation models and longer task sequences.

\bibliography{references}

\appendix
\section{Detailed Proofs}
\label{proof:thm1}

\subsection{Proof of Theorem~1}

To simplify notation, let $A = \Delta W_\ell$ and $B = J_\ell(x)$. Let the
reduced SVD~\cite{matrixgolub2013} of $B$ be
\[
    B = \sum_{i=1}^{k} \sigma_i\, u_i v_i^\top,
\]
where $\sigma_i > 0$ and $\{u_i\}$, $\{v_i\}$ are the left and right singular
vectors. By the definition of the nuclear norm,
$\|B\|_* = \sum_{i=1}^{k} \sigma_i$.

By linearity of the Frobenius inner product,
\begin{equation}
    \langle A,\, B \rangle_F
    = \sum_{i=1}^{k} \sigma_i \left\langle A,\, u_i v_i^\top \right\rangle_F.
\end{equation}
Using $\langle A, B \rangle_F = \mathrm{tr}(A^\top B)$ and the cyclic
invariance of the trace, each term simplifies as
\begin{equation}
    \left\langle A,\, u_i v_i^\top \right\rangle_F
    = \mathrm{tr}(A^\top u_i v_i^\top)
    = v_i^\top A^\top u_i
    = u_i^\top A v_i,
\end{equation}
so
\begin{equation}
    \langle A,\, B \rangle_F
    = \sum_{i=1}^{k} \sigma_i \cdot u_i^\top A v_i.
    \label{eq:svd_expansion}
\end{equation}
By Cauchy--Schwarz and the operator definition of the spectral norm
$\|A\|_2 = \sup_{\|v\|_2 = 1} \|Av\|_2$, together with
$\|u_i\|_2 = \|v_i\|_2 = 1$, each scalar term satisfies
\begin{equation}
    \left|u_i^\top A v_i\right|
    \leq \|u_i\|_2 \cdot \|A v_i\|_2
    \leq \|A\|_2.
    \label{eq:matrix_element}
\end{equation}
Taking the absolute value of Eq.~\eqref{eq:svd_expansion} and applying
Eq.~\eqref{eq:matrix_element} yields
\begin{equation}
    \left|\langle A,\, B \rangle_F\right|
    \leq \sum_{i=1}^{k} \sigma_i \cdot \left|u_i^\top A v_i\right|
    \leq \|A\|_2 \sum_{i=1}^{k} \sigma_i
    = \|A\|_2 \cdot \|B\|_*.
\end{equation}
Substituting back $A = \Delta W_\ell$ and $B = J_\ell(x)$ completes the
proof.\hfill$\square$

\subsection{An Optimizer-Independent Bound on $\|J_\ell(x;\theta_{\text{ref}})\|_*$}

We now make explicit the quantity referenced in Eq.~(6) of the main text. The following two assumptions are standard in the fine-tuning literature~\cite{NTK,liu2020linearity}.

\begin{assumption}[Local Lipschitz continuity of the layer-$\ell$ Jacobian]\label{assumption:local_lip}
    There exist a constant $L_{J,\ell}>0$ and an open neighborhood $\mathcal U\ni\theta_0$ such that for every $x\in\mathrm{supp}(\mathcal D_s)$ and every $\theta\in\mathcal U$,
\begin{equation}
    \big\|J_\ell(x;\theta)-J_\ell(x;\theta_0)\big\|_{*}\;\le\;L_{J,\ell}\,\|\theta-\theta_0\|_2.
\end{equation}
\end{assumption}

Assumption~\ref{assumption:local_lip} is implied by a bounded Hessian of $f$ at $\theta_0$, which holds under the smooth-activation, bounded-input regime standard in NTK analyses~\cite{NTK}.

\begin{assumption}[Bounded fine-tuning radius]\label{assumption:bound_ft}
    There exists $R>0$ such that
\begin{equation}
    \|\theta_{\text{ref}}-\theta_0\|_2\;\le\;R,
\end{equation}
where $\theta_{\text{ref}}=\theta^\star_{t-1}$ in CL and $\theta_{\text{ref}}=\theta_0+\alpha_s\tau_s$ in MM.
\end{assumption}

Assumption~\ref{assumption:bound_ft} is enforced by any fine-tuning protocol with a finite step budget, learning rate schedule, and (implicit or explicit) weight decay. It is the same regime under which task arithmetic and NTK-linearization arguments are typically invoked~\cite{ortizjimenez2023}.

\begin{proposition}[Optimizer-independent bound on the Jacobian nuclear norm]\label{prop:jac_quanity}
    Under Assumptions 1 and 2,
    \begin{equation}
        \mathbb E_{x\sim\mathcal D_s}\big[\|J_\ell(x;\theta_{\text{ref}})\|_{*}\big]\;\le\;\mathcal J_\ell(\mathcal D_s;\theta_0)\;+\;L_{J,\ell}\,R,
    \end{equation}
where $\mathcal J_\ell(\mathcal D_s;\theta_0):=\mathbb E_{x\sim\mathcal D_s}\!\big[\|J_\ell(x;\theta_0)\|_{*}\big]$. 

\end{proposition}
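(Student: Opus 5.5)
The plan is a pointwise triangle-inequality argument followed by taking expectations. Fix any $x\in\mathrm{supp}(\mathcal D_s)$. Since the nuclear norm is a norm, the triangle inequality gives
\[
\|J_\ell(x;\theta_{\text{ref}})\|_* \;\le\; \|J_\ell(x;\theta_0)\|_* + \big\|J_\ell(x;\theta_{\text{ref}})-J_\ell(x;\theta_0)\big\|_*.
\]
I will then apply Assumption~\ref{assumption:local_lip} with $\theta=\theta_{\text{ref}}$ to bound the second term by $L_{J,\ell}\|\theta_{\text{ref}}-\theta_0\|_2$. Assumption~\ref{assumption:bound_ft} in turn bounds this by $L_{J,\ell}R$. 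The resulting inequality
\[
\|J_\ell(x;\theta_{\text{ref}})\|_*\le\|J_\ell(x;\theta_0)\|_*+L_{J,\ell}R
\]
then holds for every $x$ in the support.

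Next I take $\mathbb E_{x\sim\mathcal D_s}$ of both sides. Monotonicity and linearity of expectation give the claim, because $L_{J,\ell}R$ does not depend on $x$. This step needs the expectations to be well defined. The left-hand side is a nonnegative measurable function of $x$, so its expectation exists, possibly as $+\infty$. If $\mathcal J_\ell(\mathcal D_s;\theta_0)=+\infty$, the bound holds trivially, so no integrability assumption beyond what the statement implicitly uses is needed.

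The one genuine gap is that Assumption~\ref{assumption:local_lip} only applies for $\theta\in\mathcal U$, while Assumption~\ref{assumption:bound_ft} only places $\theta_{\text{ref}}$ in the closed ball $\bar B(\theta_0,R)$. I expect reconciling these to be the main obstacle. My first approach is to read the assumptions jointly, as the paper's phrasing "local neighborhood of radius $R$" suggests: take $\mathcal U\supseteq \bar B(\theta_0,R)$, shrinking $R$ or enlarging $\mathcal U$ as the fine-tuning regime allows. I would state this explicitly at the start of the proof.

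As a fallback, if $\mathcal U$ is only a small ball, I would use convexity instead. The segment from $\theta_0$ to $\theta_{\text{ref}}$ lies in $\bar B(\theta_0,R)$. If the Lipschitz bound holds locally around every point of that ball with the same constant, I would chain the bound along a fine partition of the segment using the triangle inequality. The increments then telescope to $L_{J,\ell}\|\theta_{\text{ref}}-\theta_0\|_2$, which recovers the same conclusion.
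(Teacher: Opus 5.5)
Your proof is correct and follows the paper's argument exactly. Both apply the triangle inequality for the nuclear norm pointwise on $\mathrm{supp}(\mathcal D_s)$, use Assumption~1 at $\theta=\theta_{\text{ref}}$ and then Assumption~2 to replace $\|\theta_{\text{ref}}-\theta_0\|_2$ by $R$, and finally take expectations. The paper's proof silently assumes $\theta_{\text{ref}}\in\mathcal U$, so stating $\bar B(\theta_0,R)\subseteq\mathcal U$ explicitly is the right way to close that point. Your chaining fallback would instead need a pairwise Lipschitz bound, which Assumption~1 (anchored at $\theta_0$) does not provide, but you do not need it.
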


\begin{proof}
    By the triangle inequality applied to the nuclear norm,
\begin{equation}
     \|J_\ell(x;\theta_{\text{ref}})\|_{*}
\;\le\;
\|J_\ell(x;\theta_0)\|_{*}
\;+\;
\|J_\ell(x;\theta_{\text{ref}})-J_\ell(x;\theta_0)\|_{*}.
\end{equation}

By Assumption~\ref{assumption:local_lip} and Assumption~\ref{assumption:bound_ft}, the second term is bounded by $L_{J,\ell}\|\theta_{\text{ref}}-\theta_0\|_2\le L_{J,\ell}R$ for every $x\in\mathrm{supp}(\mathcal D_s)$. Taking expectation over $x\sim\mathcal D_s$ yields the claim.
\end{proof}

 The three quantities on the right-hand side of Proposition~\ref{prop:jac_quanity} admit clean, non-optimizer interpretations. First, $\mathcal J_\ell(\mathcal D_s;\theta_0)$ is a property of the pre-trained representation: it measures how much of the layer-$\ell$ Jacobian energy $\theta_0$ places on the source-task distribution. Second, $L_{J,\ell}$ is a curvature constant of the model at $\theta_0$, determined by architecture and activation choice. Third, $R$ is set by the fine-tuning protocol (step budget, learning rate, weight decay) and is matched across optimizers in our experiments. Different optimizers may reach different $\theta_{\text{ref}}$ within the ball of radius $R$, but none of the three quantities above changes. 

\subsection{Spectral--Nuclear Duality and the Steepest Descent Solution}
\label{proof:spectral_dual}

This section derives the closed-form solution of
Eq.(9) in the main text. The derivation follows
the standard argument for spectral-norm steepest
descent~\cite{bernstein2024}.

\textbf{Spectral--nuclear duality.} We first record the fact that the
nuclear norm is dual to the spectral norm: for any matrix $G$,
\begin{equation}
    \|G\|_* = \max_{\|T\|_2 \leq 1} \langle G,\, T \rangle_F,
    \label{eq:duality}
\end{equation}
and, if $G = U \Sigma V^\top$ is a reduced SVD, the maximum is attained
uniquely (up to the null space of $G$) at $T = U V^\top$.

\emph{Proof of Eq.~\eqref{eq:duality}.} Substituting the SVD of $G$ and
using the same manipulation as in
Eqs.~\eqref{eq:svd_expansion}--\eqref{eq:matrix_element},
\begin{equation}
    \langle G,\, T \rangle_F
    = \sum_i \sigma_i \cdot u_i^\top T v_i
    \;\leq\; \sum_i \sigma_i \cdot \|T\|_2
    \;\leq\; \sum_i \sigma_i
    = \|G\|_*,
\end{equation}
where the last inequality uses $\|T\|_2 \leq 1$. The upper bound is
attained by $T = UV^\top$: since $U^\top U = V^\top V = I_r$, we have
$u_i^\top (UV^\top) v_i = 1$ for every $i$, so
$\langle G, UV^\top \rangle_F = \sum_i \sigma_i = \|G\|_*$. When $G$ has
full rank, uniqueness follows from the requirement
$u_i^\top T v_i = 1$ for all $i$ combined with
$\|T\|_2 \leq 1$~\cite{bernstein2024}.

\textbf{Optimal solution.} Write the candidate update as
$\Delta W_\ell = -c \cdot T$ with $c \geq 0$ and $\|T\|_2 = 1$. The
objective in Eq.(9) becomes
\begin{equation}
    f(c, T) = -c\,\langle G_\ell,\, T \rangle_F + \tfrac{\lambda}{2} c^2.
\end{equation}
For any fixed $c \geq 0$, this is minimized over $T$ by maximizing
$\langle G_\ell, T \rangle_F$ subject to $\|T\|_2 = 1$; by
Eq.~\eqref{eq:duality} the maximum equals $\|G_\ell\|_*$ and is attained
at $T^\star = U_\ell V_\ell^\top$. Substituting gives
\begin{equation}
    f(c, T^\star) = -c\,\|G_\ell\|_* + \tfrac{\lambda}{2} c^2,
\end{equation}
which is minimized over $c \geq 0$ at $c^\star = \|G_\ell\|_* / \lambda$.
Therefore
\begin{equation}
    \Delta W_\ell^\star
    = -\frac{\|G_\ell\|_*}{\lambda}\,\cdot\, U_\ell V_\ell^\top,
\end{equation}
as claimed.

\subsection{The Muon Algorithm}
\label{app:muon-alg}

For completeness, we record the Muon update at step $t$ for a weight
matrix $W_\ell$; see~\citet{jordan2024muon} for the full description.

Given the current-step gradient $G_\ell^{(t)}$ and momentum coefficient
$\beta \in [0, 1)$, Nesterov momentum smoothing produces the effective
gradient
\begin{equation}\small
    M_\ell^{(t)} = \beta M_\ell^{(t-1)} + G_\ell^{(t)},
    \qquad
    \tilde{M}_\ell^{(t)} = \beta M_\ell^{(t)} + G_\ell^{(t)}.
\end{equation}
Setting $X_0 = \tilde{M}_\ell^{(t)} / \|\tilde{M}_\ell^{(t)}\|_F$, the
Newton--Schulz iteration
\begin{equation}\small
    X_{k+1} = \tfrac{3}{2} X_k - \tfrac{1}{2} X_k X_k^\top X_k,
    \qquad k = 0, 1, \ldots, K-1,
    \label{eq:ns_iter}
\end{equation}
drives $X_K$ to the polar factor $U_\ell V_\ell^\top$ of
$\tilde{M}_\ell^{(t)}$; convergence is analyzed in
Appendix~\ref{proof:ns_convergence}. The weight update is
$W_\ell^{(t+1)} = W_\ell^{(t)} - \eta \cdot X_K$. Following~\citet{jordan2024muon},
we use $\beta = 0.95$ and $K = 5$ throughout.

\subsection{Convergence of the Newton--Schulz Iteration}
\label{proof:ns_convergence}

The Newton--Schulz iteration in Eq.~\eqref{eq:ns_iter} is a classical
scheme for the polar factor~\cite{bernstein2024}. Its convergence to
$U_\ell V_\ell^\top$ follows from a scalar reduction.

\textbf{Equivariance of the singular vectors.} Let
$X_k = \hat{U}\hat{\Sigma}_k\hat{V}^\top$ be the SVD of the $k$-th
iterate. Substituting into Eq.~\eqref{eq:ns_iter},
\begin{equation}
    X_{k+1}
    = \tfrac{3}{2}\hat{U}\hat{\Sigma}_k\hat{V}^\top
      - \tfrac{1}{2}\hat{U}\hat{\Sigma}_k^3\hat{V}^\top
    = \hat{U}\!\left(\tfrac{3}{2}\hat{\Sigma}_k - \tfrac{1}{2}\hat{\Sigma}_k^3\right)\!\hat{V}^\top.
\end{equation}
The singular vector matrices $\hat{U}$ and $\hat{V}$ are therefore
invariant along the iteration, and each singular value $\hat{\sigma}_i$
evolves independently under the scalar map
\begin{equation}
    f(x) = \tfrac{3}{2}x - \tfrac{1}{2}x^3.
\end{equation}

\textbf{Fixed-point analysis.} The unique positive fixed point of $f$ is
$x = 1$. Since $f'(x) = \tfrac{3}{2}(1 - x^2)$, we have $f'(1) = 0$, so
$x = 1$ is an attracting fixed point. For any $x \in (0, \sqrt{3})$, the
sequence $\{f^{(k)}(x)\}_{k \geq 0}$ converges monotonically to $1$.

\textbf{Initialization.} The Frobenius normalization
$X_0 = \tilde{M}_\ell^{(t)}/\|\tilde{M}_\ell^{(t)}\|_F$ ensures
\begin{equation}
    \hat{\sigma}_i^{(0)}
    \leq \|X_0\|_2
    \leq \|X_0\|_F = 1 < \sqrt{3},
\end{equation}
so every initial singular value lies in the basin of attraction
$(0, \sqrt{3})$~\cite{bernstein2024}, guaranteeing convergence.

\textbf{Rate.} Since $f'(1) = 0$ and $f''(1) = -3 \neq 0$, convergence is
of second order. In practice $K = 5$ to $10$ steps drive the singular
values to within $10^{-3}$ of unity, so
$X_K \approx \hat{U}\hat{V}^\top = U_\ell V_\ell^\top$.

\section{Detailed Related Work}
\subsection{Continual Learning}
Continual learning aims to incrementally acquire knowledge from a sequence of tasks while avoiding catastrophic forgetting~\cite{catastrophicmccloskey1989,mcclelland1995}. Existing methods can be broadly grouped into five categories~\cite{wang2024survey,li2025libcontinual}.

Regularization-based methods mitigate forgetting by adding explicit penalty terms to the loss function that constrain updates to parameters deemed important for previously learned tasks~\cite{EWC,zenke2017,Li2018LwF,liu2026continual}.Replay-based methods preserve past knowledge by storing or generating representative samples from old tasks and replaying them during the learning of new tasks~\cite{iCaRL,lopezpaz2017,shin2017}.
Optimization-based methods explicitly manipulate the gradient update process to prevent interference with previously learned knowledge, often by projecting gradients into subspaces that are orthogonal to those of old tasks~\cite{chaudhry2018,farajtabar2020,saha2021,liu2025last,Kang2025}.
Representation-based methods leverage robust and stable feature representations to improve resistance to catastrophic forgetting, including approaches based on self-supervised learning and pre-trained model prompting~\cite{madaan2021,pham2021,wang2022l2p}.
Architecture-based methods explicitly allocate separate parameter subspaces or sub-networks to different tasks, thereby avoiding inter-task interference by design~\cite{rusu2016,mallya2018,serra2018}.

\subsection{Model Merging.}
Model merging combines multiple independently fine-tuned models into a 
single unified model directly in weight space, without requiring 
additional training or data access~\cite{wortsman2022model,matena2022fisher}. 

Existing solutions can be broadly classified into during-merging and 
pre-merging methods~\cite{ilharco2023,yang2024adamerging}. During-merging 
methods design sophisticated combination algorithms applied to 
already-trained models, such as resolving sign conflicts~\cite{yadav2023ties}, 
learning adaptive merging coefficients~\cite{yang2024adamerging}, and 
weighting parameter contributions via Fisher information~\cite{matena2022fisher}. 
Pre-merging methods instead modify the fine-tuning process to produce 
more mergeable models, for example by fine-tuning in the linearized 
tangent space~\cite{ortizjimenez2023} or applying sparsity-inducing 
regularization~\cite{yu2024language}. Our work belongs to the pre-merging 
category. Specifically, we analyze model merging from the perspective 
of the Muon optimizer, which orthogonalizes 
gradient updates via Newton-Schulz iteration and implicitly enforces 
spectral norm constraints on weight updates. 

\subsection{Muon Optimizer}

The Muon optimizer~\cite{jordan2024muon} is a recently proposed training algorithm for the hidden-layer weight matrices of neural networks. At each update step, it computes Nesterov momentum on the raw gradient and subsequently 
orthogonalizes the resulting matrix via Newton-Schulz iterations, yielding an update that approximates steepest descent under the spectral norm. By constraining parameter changes to lie on the orthogonal group, Muon encourages 
more balanced gradient flow across the singular value spectrum of each weight matrix, and has demonstrated competitive convergence relative to AdamW in large-scale language model pretraining benchmarks at comparable computational cost.Recent work has further scaled Muon to billion-parameter language models~\cite{lu2026muonogd} and explored its theoretical connections to spectral-norm steepest descent~\cite{bernstein2024}, establishing 
orthogonalization-based updates as a practically viable and theoretically grounded paradigm for modern large-scale training.

However, all prior work studies Muon exclusively through the lens of optimization efficiency and convergence speed in single-task settings, leaving its implications for sequential learning largely unexplored. In this work, we take a different perspective and show that the orthogonal update structure imposed by Muon provides a strict theoretical bound on 
catastrophic forgetting in continual learning. Specifically, by analyzing weight perturbations under the spectral norm, we demonstrate that Muon's updates are provably bounded in spectral norm magnitude, which directly constrains the deviation of the model's behavior on previously learned tasks. This spectral-norm perspective not only offers new theoretical insight into why geometry-aware optimizers may be inherently beneficial for sequential learning, but also motivates a principled optimizer-centric approach to forgetting mitigation that is complementary to existing regularization- and replay-based methods.
\section{More Details of Model Merging Experiments}
\label{app:mm_setup}

This appendix expands the eight-task addition protocol summarized in Section ``Model Merging Experiments".

\subsection{Datasets and Tasks}

We follow the standard eight-task addition benchmark of Model Merging~\cite{ortizjimenez2023,liu2026understanding}, which independently fine-tunes one expert on each of the following image-classification datasets.
\begin{itemize}
    \item \textbf{SUN397}~\citep{xiao2010sun}: large-scale scene recognition
    across $397$ scene categories.
    \item \textbf{Stanford Cars}~\citep{krause2013cars}: fine-grained
    recognition over $196$ car models.
    \item \textbf{RESISC45}~\citep{cheng2017resisc45}: aerial scene
    classification with $45$ remote-sensing classes.
    \item \textbf{EuroSAT}~\citep{helber2019eurosat}: land-use and land-cover
    classification from Sentinel-$2$ satellite imagery.
    \item \textbf{SVHN}~\citep{netzer2011svhn}: street-view digit
    recognition in natural images.
    \item \textbf{GTSRB}~\citep{stallkamp2011gtsrb}: German traffic-sign
    recognition across $43$ classes.
    \item \textbf{MNIST}~\citep{lecun1998mnist}: handwritten digit
    recognition.
    \item \textbf{DTD}~\citep{cimpoi2014dtd}: texture classification
    with $47$ describable texture categories.
\end{itemize}
The eight tasks span scene recognition, fine-grained classification, remote
sensing, satellite land-use classification, digit recognition, traffic-sign
recognition, and texture classification, and therefore stress cross-task
interference across largely disjoint visual domains. Standard train, validation,
and test splits are used throughout.

\subsection{Backbones and Fine-tuning Protocol}

We use CLIP-pretrained ViT-B/32, ViT-B/16, and ViT-L/14 as the shared
backbone $\theta_0$. For every task, the zero-shot classification head
constructed from the CLIP text encoder is frozen, and only the image encoder
is fine-tuned from $\theta_0$. This is the standard protocol under which
task vectors $\tau_t=\theta_t^{\star}-\theta_0$ are directly comparable
across tasks and across optimizers.

The AdamW baseline uses learning rate $10^{-5}$ and weight decay $0.1$. For
Muon, all eligible matrix-valued parameters of the image encoder are optimized
with Muon and the remaining vector, scalar, and non-eligible matrix
parameters (biases, normalization parameters, and the input embedding and
output projection layers) are handled by an auxiliary AdamW, following the
standard Muon protocol~\citep{jordan2024muon}. Muon uses learning rate
$10^{-4}$, Nesterov momentum $\beta=0.95$, and five Newton--Schulz
iterations; the auxiliary AdamW uses learning rate $10^{-5}$, so that the
learning rate on the vector parameters is identical to the single-optimizer
AdamW baseline. All runs use cosine learning-rate decay with $500$ warm-up
steps, gradient norm clipped to $1.0$, effective batch size $128$, and random
seed $1993$. The per-task number of fine-tuning steps follows the schedule of
Task Arithmetic~\citep{ilharco2023} and is identical across
optimizers, so the difference between the AdamW and Muon columns of
Table 1 is attributable to the update geometry rather than to
tuning effort. We use this configuration as \emph{Muon} throughout the paper
and do not tune the optimizer assignment separately for individual
Transformer components.

\subsection{Baselines}
\label{app:mm_baselines}

We compare Muon against three pre-merging strategies that span the main
existing axes for improving mergeability. For each strategy, we report both
the AdamW version and the version obtained by replacing AdamW with Muon on
the eligible matrix-valued parameters, denoted by the ``$+$ Muon'' suffix.

\textbf{Non-linear Fine-tuning}~\citep{ilharco2022editing} is the standard
task arithmetic approach, in which each task is fine-tuned in the original
non-linear parameter space starting from $\theta_0$. It serves as the direct
baseline for measuring the effect of the optimizer alone, without any
additional pre-merging mechanism.

\textbf{Tangent Task Arithmetic
(TTA)}~\citep{ortizjimenez2023} fine-tunes in the tangent space around
$\theta_0$, so that the resulting task vectors are constrained to the linear
regime of the NTK. It represents the parameterization-side pre-merging axis,
i.e., improving mergeability by modifying where fine-tuning takes place.

\textbf{OrthoReg}~\citep{liu2026understanding} augments non-linear
fine-tuning with an explicit column-wise orthogonality regularizer on the
weight updates $\Delta W$, i.e., a Frobenius penalty $\|(\Delta W)^{\top}
\Delta W - I\|_F^2$ on every tuned linear layer. It represents the loss-side
pre-merging axis, i.e., improving mergeability by modifying the training
objective.

Pairing each of these three strategies with Muon therefore isolates the
effect of optimizer-level spectral control from the choice of pre-merging
strategy, and lets us test whether the improvement predicted by
Theorem~1 is orthogonal to the existing
loss-side and parameterization-side mechanisms.

\subsection{Merging Coefficient and Metrics}

After fine-tuning, we build task vectors $\tau_t=\theta_t^{\star}-\theta_0$
and compose them via Task Arithmetic ,
\begin{equation}
    \theta_{\mathrm{MT}}
    \;=\;
    \theta_0 + \alpha\sum_{t=1}^{8}\tau_t,
\end{equation}
where a single coefficient $\alpha$ is shared across all tasks and selected
on the validation sets from $\{0,0.05,\ldots,1.0\}$ according to normalized
accuracy. The selected values are $\alpha=0.25$ for ViT-B/32 and ViT-B/16
and $\alpha=0.35$ for ViT-L/14; the same selected $\alpha$ is used across
optimizers so that any performance gap is not confounded with a
better-chosen merging coefficient.

We report two metrics. \emph{Absolute Accuracy} (Abs.Acc.) is the mean
top-$1$ accuracy of the merged model over the eight tasks. \emph{Normalized
Accuracy} (Norm.Acc.) is the mean ratio between the merged-model accuracy on
each task and the corresponding single-task expert accuracy. Norm.Acc.
therefore reflects how much of the single-task capability is preserved after
merging, and factors out the raw difficulty of each individual dataset.In addition, our research also focuses on the accuracy of each unmerged dataset. As shown in Figure \ref{fig:radar_merged} , on different models, Muon has a certain degree of superiority over AdamW.

\begin{figure}[t]
\centering
\includegraphics[width=\linewidth]{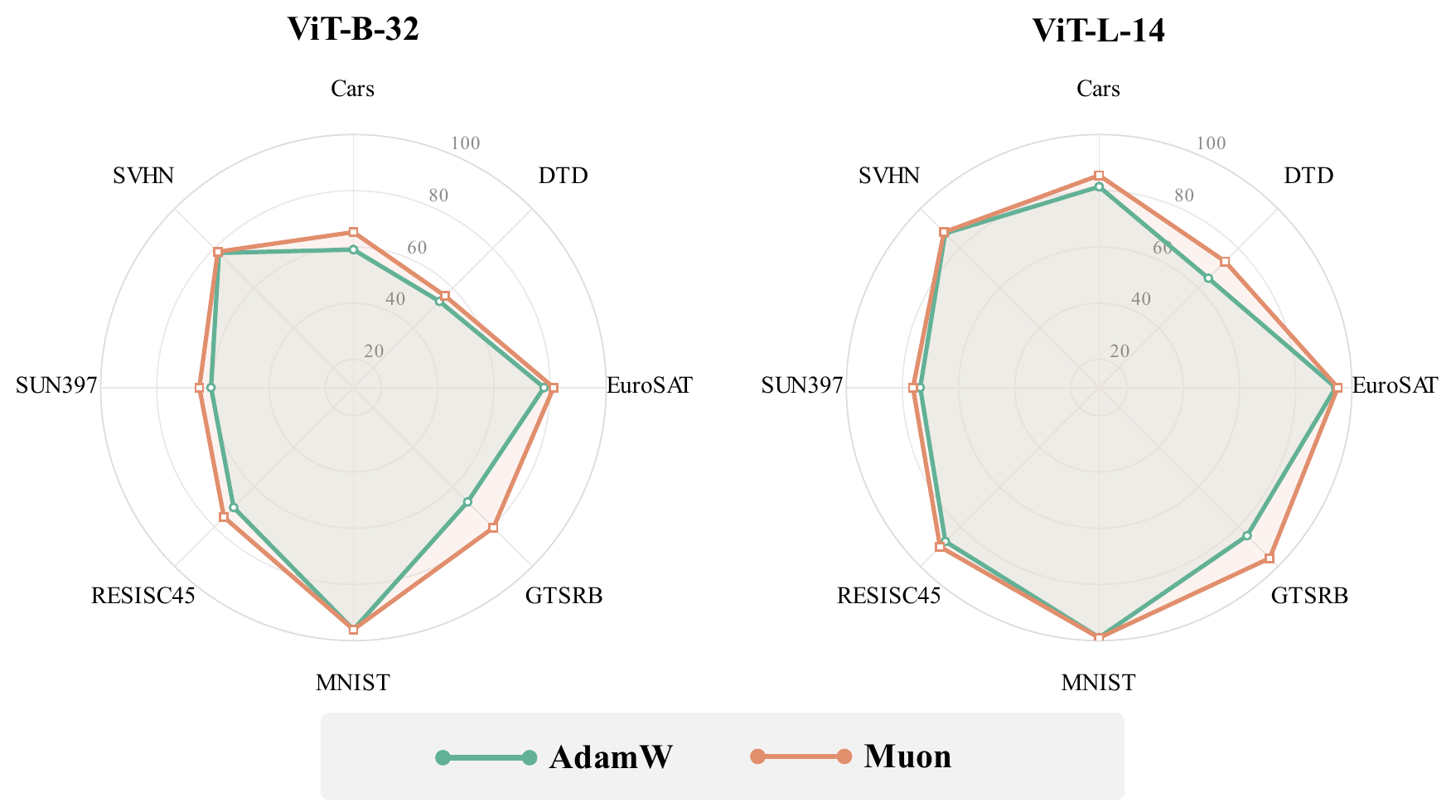}
\caption{Per-task accuracy of the merged model on ViT-B/32 (left) and ViT-L/14 (right).}
\label{fig:radar_merged} 
\end{figure}

\noindent\textbf{Sensitivity to the merging coefficient $\alpha$.}
Figure~\ref{fig:sensitive_alpha} plots Abs.~Acc. of the merged model as a function of $\alpha \in \{0, 0.05, \dots, 1.0\}$. Muon not only attains a higher peak but yields a visibly flatter curve. In addition, our research also focuses on the single-task performance of each unmerged dataset. As shown in Figure ~\ref{fig:sensitive_alpha}, on different models, Muon has a certain degree of superiority over AdamW. 

\begin{figure}[t]
\centering
\includegraphics[width=\linewidth]{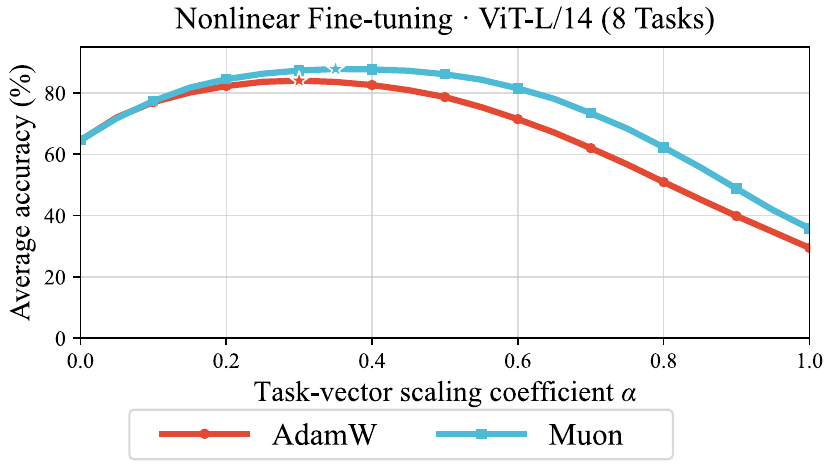}
\caption{Sensitivity of the merged model to the scaling coefficient $\alpha$ on ViT-L/14. Muon reaches a higher peak than AdamW and, more importantly, degrades much more gracefully as $\alpha$ moves away from the optimum.}
\label{fig:sensitive_alpha}
\vspace{-10pt}
\end{figure}

\section{More Details of Continual Learning Experiments}
\label{app:cl}

This appendix expands the CL experimental setup summarized in
Section~5.

\subsection{Choice of Base Methods.}

We evaluate Muon on two baselines that lie at opposite ends of the interference-mitigation spectrum, so that the effect of the optimizer alone can be isolated from architectural mechanisms.

The first is plain LoRA fine-tuning, which inserts low-rank adapters into every attention and MLP projection of the CLIP image encoder and freezes the backbone. It contains no rehearsal buffer, no routing, no regularization, and no gradient projection, so any change in cross-task retention is attributable to the geometry of the update alone. We use it in both class-incremental (LoRA4CIL) and task-incremental (LoRA4TIL) protocols.

The second is MoE-Adapters4CL (Yu et al. 2024a), a widely recognized parameter-efficient framework that already suppresses interference by construction through the DDAS distribution auto-selector, top-2 expert routing, and an incremental activate–freeze strategy. It is one of the strongest baselines on both CIL and MTIL for vision-language models. We choose it for two reasons. First, it delivers strong results on both CIL and MTIL under a single unified architecture. Second, its trainable parameters consist entirely of matrix-valued LoRA experts — exactly the parameters that Muon updates — so switching from AdamW to Muon requires no re-parameterization and the comparison is genuinely controlled at the optimizer level.

For both baselines, "+ Muon" denotes exactly the same architecture, data order, training budget, and (for MoE-Adapters4CL) DDAS checkpoints as the AdamW baseline, with only the optimizer used for the matrix-valued adapter parameters changed.

\subsection{Benchmarks and Protocols}

\textbf{Class-Incremental Learning (CIL).}
We evaluate on three image-classification benchmarks under CIL.
CIFAR-100~(Krizhevsky, Hinton et al.\ 2009) contains 100 classes and
60,000 images, uniformly split into 10, 20, or 50 disjoint tasks with
10, 5, or 2 classes per task, respectively. TinyImageNet~(Le, Yang et
al.\ 2015) contains 200 classes and 100,000 images of size $64\times64$,
organized as a 100-class base task followed by 5, 10, or 20 incremental
steps of 20, 10, or 5 classes each. ImageNet-R~(Hendrycks et al.\ 2021)
is a 200-class subset of ImageNet spanning multiple visual styles; we
adopt the 10-task $\times$ 20-class split to assess CIL under
distribution shift. These three benchmarks jointly stress short and
long task sequences as well as robustness to the style shift
represented by ImageNet-R.

\textbf{Multi-Domain Task-Incremental Learning (MTIL).}
We adopt the full-shot Order-I benchmark of~\citet{ZSCL}
and learn the following 11 datasets sequentially: Aircraft,
Caltech101, CIFAR-100, DTD, EuroSAT, Flowers, Food-101, MNIST,
Oxford-IIIT~Pet, Stanford Cars, and SUN397. The sequence spans
fine-grained, object, texture, remote-sensing, digit, food, and scene
recognition, and therefore stresses cross-task interference across
heterogeneous visual domains. At evaluation time, the Distribution
Discriminative Auto-Selector (DDAS) routes in-distribution inputs to a
learned task router and forwards unseen inputs to the frozen CLIP
model, following MoE-Adapters4CL~(Yu et al.\ 2024a).

\textbf{Metrics.}
For CIL, \emph{Avg.} is the mean cumulative top-1 accuracy over all
learning stages, and \emph{Last} is the cumulative top-1 accuracy after
the final task. For MTIL, we compute the full $T\times T$ accuracy
matrix in which entry $(i,j)$ is the accuracy on dataset $i$ after
learning task $j$. Let $A_{i,j}$ denote this entry. We report:
\begin{align}
\text{Transfer} &= \frac{1}{T-1}\sum_{j=1}^{T-1}\frac{1}{T-j}\sum_{i=j+1}^{T}A_{i,j}, \\
\text{Average} &= \frac{1}{T^2}\sum_{i,j=1}^{T}A_{i,j}, \\
\text{Last}    &= \frac{1}{T}\sum_{i=1}^{T}A_{i,T}.
\end{align}
\emph{Transfer} averages performance on tasks \emph{before} they are
observed, and therefore measures zero-shot generalization.
\emph{Average} is the mean of all matrix entries. \emph{Last} averages
accuracy on all datasets after the final task and therefore emphasizes
retention after the complete task sequence.

\subsection{Architecture and Optimization Protocol}

\textbf{Backbone.}
All CL experiments use a CLIP-pretrained ViT-B/16 image encoder with
the backbone parameters frozen. Zero-shot classification heads are
constructed from the frozen CLIP text encoder, following the standard
MoE-Adapters4CL protocol.

\textbf{MoE-Adapters configuration.}
In CIL, a single router and two experts are shared by all class
increments. In MTIL, we use 22 LoRA experts and one task-specific
router per dataset, with top-2 expert selection and the incremental
activate--freeze strategy of MoE-Adapters4CL~(Yu et al.\ 2024a). Only
the LoRA experts and routers are trainable; the CLIP backbone and text
encoder remain frozen throughout.

\textbf{Optimizer assignment.}
Following the standard Muon protocol~(Jordan et al.\ 2024), every
eligible matrix-valued MoE parameter is optimized with Muon using $K=5$
Newton--Schulz iterations, while vector and scalar parameters (biases,
LayerNorm parameters, router head bias) are handled by an auxiliary
AdamW. Both branches use cosine learning-rate decay and zero weight
decay. The AdamW baseline optimizes the same set of parameters with a
single AdamW instance.

\textbf{DDAS.}
The Distribution Discriminative Auto-Selector uses TinyImageNet as its
reference dataset and a threshold of $0.0655$. To isolate the optimizer
effect on the MoE-Adapter and router parameters, both the AdamW and the
Muon runs share the same Adam-trained DDAS checkpoints; any performance
gap therefore arises from the MoE optimization step rather than from
the task-identification module.

\begin{table}[t]
\centering
\begin{tabular}{lcccccc}
\toprule
 & \multicolumn{2}{c}{5 tasks} & \multicolumn{2}{c}{10 tasks} & \multicolumn{2}{c}{20 tasks} \\
\cmidrule(lr){2-3} \cmidrule(lr){4-5} \cmidrule(lr){6-7}
Method & Avg. & Last & Avg. & Last & Avg. & Last \\
\midrule
\multicolumn{7}{l}{\textit{LoRA4TIL}} \\
AdamW & 93.92 & 91.81 & 91.68 & 90.15 & 91.81 & 90.44 \\
Muon  & \textbf{95.61} & \textbf{93.45} & \textbf{95.07} & \textbf{91.94} & \textbf{96.07} & \textbf{93.64} \\
\bottomrule
\end{tabular}
\caption{CIFAR-100 (5/10/20-task protocols) + Lora for TIL.}
\label{tab:cifar100_til}
\end{table}

\begin{table}[t]
\centering
\begin{tabular}{lcccccc}
\toprule
 & \multicolumn{4}{c}{ImageNet-R} & \multicolumn{2}{c}{TinyImageNet} \\
\cmidrule(lr){2-5} \cmidrule(lr){6-7}
 & \multicolumn{2}{c}{10 tasks} & \multicolumn{2}{c}{20 tasks} & \multicolumn{2}{c}{20 tasks} \\
\cmidrule(lr){2-3} \cmidrule(lr){4-5} \cmidrule(lr){6-7}
Method & Avg. & Last & Avg. & Last & Avg. & Last \\
\midrule
\multicolumn{7}{l}{\textit{LoRA4TIL}} \\
AdamW & 87.97          & 87.87          & 90.33          & \textbf{89.18} & 91.60          & 87.88 \\
Muon  & \textbf{91.17} & \textbf{88.48} & \textbf{91.85} & 88.45          & \textbf{91.75} & \textbf{88.04} \\
\bottomrule
\end{tabular}
\caption{ImageNet-R (10/20-task) and TinyImageNet (20-task) + LoRA for TIL.}
\label{tab:imagenetr_tinyimagenet_til}
\end{table}

\begin{table}[h]
\centering
\begin{tabular}{lcccc}
\toprule
 & \multicolumn{2}{c}{10 tasks} & \multicolumn{2}{c}{50 tasks} \\
\cmidrule(lr){2-3} \cmidrule(lr){4-5}
Method & Avg. & Last & Avg. & Last \\
\midrule
\multicolumn{5}{l}{\textit{LoRA4CIL}} \\
AdamW  & 77.56          & 66.56          & 62.04          & 47.87 \\
Muon   & \textbf{78.73} & \textbf{67.69} & \textbf{64.39} & \textbf{51.01} \\
\bottomrule
\end{tabular}
\caption{CIFAR-100 (10/50-task protocols) + Lora for CIL.}
\label{tab:cifar100_cil}
\end{table}

\begin{table*}[h]\small
\centering
{\setlength{\tabcolsep}{12pt}
\begin{tabular}{lrrrrrr}
\toprule
& \multicolumn{2}{c}{10 tasks} & \multicolumn{2}{c}{20 tasks} & \multicolumn{2}{c}{50 tasks} \\
\cmidrule(lr){2-3}\cmidrule(lr){4-5}\cmidrule(lr){6-7}
Method & Avg. & Last & Avg. & Last & Avg. & Last \\
\midrule
UCIR   \cite{UCIR}                           & 58.66 & 43.39 & 58.17 & 40.63 & 56.86 & 37.09 \\
BiC      \cite{BIC}                         & 68.80 & 53.54 & 66.48 & 47.02 & 62.09 & 41.04 \\
PODNet  \cite{PODNet}                            & 58.03 & 41.05 & 53.97 & 35.02 & 51.19 & 32.99 \\
DER   \cite{DER}                            & 74.64 & 64.35 & 73.98 & 62.55 & 72.05 & 59.76 \\
DyTox+   \cite{DyTox}                      & 74.10 & 62.34 & 71.62 & 57.43 & 68.90 & 51.09 \\
DNE       \cite{DNE}                       & 74.86 & 70.04 & --    & --    & --    & --    \\
\midrule
CLIP                      & 74.47 & 65.92 & 75.20 & 65.74 & 75.67 & 65.94 \\
Fine-tune                           & 65.46 & 53.23 & 59.69 & 43.13 & 39.23 & 18.89 \\
LwF    \cite{Li2018LwF}                            & 65.86 & 48.04 & 60.64 & 40.56 & 47.69 & 32.90 \\
iCaRL     \cite{iCaRL}                        & 79.35 & 70.97 & 73.32 & 64.55 & 71.28 & 59.07 \\
LwF-VR   \cite{LwF-VR}                          & 78.81 & 70.75 & 74.54 & 63.54 & 71.02 & 59.45 \\
ZSCL     \cite{ZSCL}                        & 82.15 & 73.65 & 80.39 & 69.58 & 79.92 & 67.36 \\
\midrule
\multicolumn{2}{l}{\textit{MoE-Adapters4CL}}\\
AdamW    & 85.30 & 77.56 & 84.36 & 76.61 & 83.39 & 73.76 \\
Muon          & \textbf{85.98} & \textbf{78.82} & \textbf{84.87} & \textbf{77.27} & \textbf{83.84} & \textbf{74.02} \\
\bottomrule
\end{tabular}}
\caption{class-incremental CIFAR-100 +  MoE-Adapter for CL.  }
\label{tab:cl_cifar_sota-supp}
\end{table*}

\begin{table}[h]
\centering
\begin{tabular}{lcccc}
\toprule
 & \multicolumn{2}{c}{10 tasks} & \multicolumn{2}{c}{20 tasks} \\
\cmidrule(lr){2-3} \cmidrule(lr){4-5}
Method & Avg. & Last & Avg. & Last \\
\midrule
\multicolumn{5}{l}{\textit{LoRA4CIL}} \\
AdamW & 79.43        & 71.22        & 71.04        & 64.92        \\
Muon  & \textbf{81.55} & \textbf{75.12} & \textbf{75.19} & \textbf{68.43} \\
\bottomrule
\end{tabular}
\caption{ImageNet-R (10/20-task protocols) + Lora for CIL.}
\label{tab:imagenetr_cil}
\end{table}

\subsection{Results}

\paragraph{LoRA4TIL.}
Tables~\ref{tab:cifar100_til} and ~\ref{tab:imagenetr_tinyimagenet_til} report task-incremental
results for the LoRA4TIL baseline. On CIFAR-100, Muon delivers consistent
and substantial improvements over AdamW across all three task-length
protocols: $+1.69$/$+1.64$ (Avg./Last) at 5 tasks, $+3.39$/$+1.79$ at
10 tasks, and $+4.26$/$+3.20$ at 20 tasks. The widening gap with sequence
length suggests that the benefit of spectral step normalization grows as
more tasks compete for the same parameter space. On ImageNet-R, Muon
improves Avg.\ at both the 10- and 20-task settings ($+3.20$ and $+1.52$,
respectively). On TinyImageNet, Muon matches or slightly exceeds AdamW at
the 20-task setting, indicating that orthogonal updates remain effective
even when within-task similarity is high.

\paragraph{LoRA4CIL.}
Under the class-incremental protocol, Muon consistently outperforms AdamW
across benchmarks and task-split configurations. The largest gains appear on
ImageNet-R, where Muon improves Avg.\ by $+2.12$ and $+4.15$ points and Last
by $+3.90$ and $+3.51$ points at the 10- and 20-task settings, respectively.
On CIFAR-100, Muon leads at both the 10- and 50-task settings, with gains of
$+1.17$/$+1.13$ at 10 tasks and $+2.35$/$+3.14$ at 50 tasks; the larger
improvement at 50 tasks suggests that spectral normalization becomes
increasingly beneficial as the number of increments grows. On TinyImageNet
the improvements are small but consistently positive ($+0.19$ to $+1.34$
across all splits).

\begin{table}[t]
\centering
\begin{tabular}{lcccc}
\toprule
 & \multicolumn{2}{c}{10 tasks} & \multicolumn{2}{c}{20 tasks} \\
\cmidrule(lr){2-3} \cmidrule(lr){4-5}
Method & Avg. & Last & Avg. & Last \\
\midrule
\multicolumn{5}{l}{\textit{LoRA4CIL}} \\
AdamW & 74.81        & 64.44        & 67.44        & 56.31        \\
Muon  & \textbf{75.00} & \textbf{64.93} & \textbf{68.72} & \textbf{57.65} \\
\bottomrule
\end{tabular}
\caption{TinyImageNet (10/20-task protocols) + Lora for CIL.}
\label{tab:tinyimagenet_cil}
\end{table}

\paragraph{MoE-Adapters4CL.}
When Muon replaces AdamW inside the MoE-Adapters4CL framework, improvements
are consistent across every benchmark and every task-length protocol.
On CIFAR-100 (Table~\ref{tab:cl_cifar_sota-supp}), Muon improves Avg.\ by
$+0.68$, $+0.51$, and $+0.45$ points and Last by $+1.26$, $+0.66$, and
$+0.26$ points at the 10-, 20-, and 50-task settings, respectively.
On TinyImageNet (Table~\ref{tab:cl_tiny_sota-supp}), Muon achieves
Avg./Last gains of $+0.77$/$+0.53$, $+0.88$/$+0.96$, and $+0.69$/$+0.18$
at the 5-, 10-, and 20-step settings, respectively.
These gains are obtained on top of a framework that already incorporates
top-2 expert routing and the incremental activate--freeze strategy,
demonstrating that the optimizer contributes an orthogonal benefit
independent of the architectural interference-mitigation mechanism.

\begin{table*}[h]\small
\centering
{\setlength{\tabcolsep}{12pt}
\begin{tabular}{lrrrrrr}
\toprule
& \multicolumn{2}{c}{5 steps} & \multicolumn{2}{c}{10 steps} & \multicolumn{2}{c}{20 steps} \\
\cmidrule(lr){2-3}\cmidrule(lr){4-5}\cmidrule(lr){6-7}
Method & Avg. & Last & Avg. & Last & Avg. & Last \\
\midrule
EWC   \cite{EWC}                              & 19.01 &  6.00 & 15.82 &  3.79 & 12.35 &  4.73 \\
EEIL   \cite{EEIL}                            & 47.17 & 35.12 & 45.03 & 34.64 & 40.41 & 29.72 \\
UCIR    \cite{UCIR}                            & 50.30 & 39.42 & 48.58 & 37.29 & 42.84 & 30.85 \\
MUC     \cite{MUC}                            & 32.23 & 19.20 & 26.67 & 15.33 & 21.89 & 10.32 \\
PASS   \cite{PASS}                             & 49.54 & 41.64 & 47.19 & 39.27 & 42.01 & 32.93 \\
DyTox   \cite{DyTox}                            & 55.58 & 47.23 & 52.26 & 42.79 & 46.18 & 36.21 \\
\midrule
CLIP                     & 69.62 & 65.30 & 69.55 & 65.59 & 69.49 & 65.30 \\
Fine-tune                           & 61.54 & 46.66 & 57.05 & 41.54 & 54.62 & 44.55 \\
LwF     \cite{Li2018LwF}                            & 60.97 & 48.77 & 57.60 & 44.00 & 54.79 & 42.26 \\
iCaRL    \cite{iCaRL}                           & 77.02 & 70.39 & 73.48 & 65.97 & 69.65 & 64.68 \\
LwF-VR    \cite{LwF-VR}                          & 77.56 & 70.89 & 74.12 & 67.05 & 69.94 & 63.89 \\
ZSCL     \cite{ZSCL}                           & 80.27 & 73.57 & 78.61 & 71.62 & 77.18 & 68.30 \\
\midrule
\multicolumn{2}{l}{\textit{MoE-Adapters4CL}}\\
AdamW    & 80.85 & 77.04 & 80.26 & 75.79 & 79.98 & 75.68 \\
Muon         & \textbf{81.62} & \textbf{77.57} & \textbf{81.14} & \textbf{76.75} & \textbf{80.67} & \textbf{75.86} \\

\bottomrule
\end{tabular}}
\caption{
TinyImageNet +  MoE-Adapter for CL (5/10/20-step protocols , each with a 100-class base task).}
\label{tab:cl_tiny_sota-supp}
\end{table*}

\begin{table*}[h]\small
\centering
\resizebox{\textwidth}{!}{
\begin{tabular}{llrrrrrrrrrrrr}
\toprule
Metric & Method & Aircraft & Caltech101 & CIFAR-100 & DTD & EuroSAT &
Flowers & Food-101 & MNIST & OxfordPet & Cars & SUN397 & Mean \\
\midrule
\multirow{9}{*}{\emph{Transfer}}
& Continual-FT & -- & 67.1 & 46.0 & 32.1 & 35.6 & 35.0 & 57.7 & 44.1 & 60.8 & 20.5 & 46.6 & 44.6 \\
& LwF \cite{Li2018LwF}          & -- & 74.5 & 56.9 & 39.1 & \textbf{51.1} & 52.6 & 72.8 & 60.6 & 75.1 & 30.3 & 55.9 & 58.9 \\
& iCaRL \cite{iCaRL}       & -- & 56.6 & 44.6 & 32.7 & 39.3 & 46.6 & 68.0 & 46.0 & 77.4 & 31.9 & 60.5 & 50.4 \\
& LwF-VR \cite{LwF-VR}      & -- & 77.1 & \underline{61.0} & 40.5 & 45.3 & 54.4 & 74.6 & 47.9 & 76.7 & 36.3 & 58.6 & 57.2 \\
& WiSE-FT  \cite{WiSE-FT}    & -- & 73.5 & 55.6 & 35.6 & 41.5 & 47.0 & 68.3 & 53.9 & 69.3 & 26.8 & 51.9 & 52.3 \\
& ZSCL   \cite{ZSCL}      & -- & \underline{86.0} & \textbf{67.4} & \textbf{45.4} & \underline{50.4} & \textbf{69.1} & \underline{87.6} & \underline{61.8} & \underline{86.8} & \underline{60.1} & \textbf{66.8} & \textbf{68.1} \\
\cmidrule(lr){2-14}
& \multicolumn{2}{l}{\textit{MoE-Adapters4CL}} \\
& AdamW        & -- & \textbf{88.4} & 53.4 & 43.9 & 39.8 & \textbf{69.1} & \textbf{88.5} & \textbf{62.1} & \textbf{89.1} & \textbf{64.7} & \underline{65.4} & 66.4 \\
& Muon         & -- & \textbf{88.4} & 59.2 & \underline{44.2} & 47.1 & 64.0 & \textbf{88.5} & 60.1 & \textbf{89.1} & \textbf{64.7} & \underline{65.4} & \underline{67.1} \\
\midrule
\multirow{9}{*}{\emph{Average}}
& Continual-FT & 25.5 & 81.5 & 59.1 & 53.2 & 64.7 & 51.8 & 63.2 & 64.3 & 69.7 & 31.8 & 49.7 & 55.9 \\
& LwF \cite{Li2018LwF}         & 36.3 & 86.9 & 72.0 & 59.0 & 73.7 & 60.0 & 73.6 & 74.8 & 80.0 & 37.3 & 58.1 & 64.7 \\
& iCaRL  \cite{iCaRL}      & 35.5 & 89.2 & 72.2 & 60.6 & 68.8 & 70.0 & 78.2 & 62.3 & 81.8 & 41.2 & 62.5 & 65.7 \\
& LwF-VR   \cite{LwF-VR}    & 29.6 & 87.7 & 74.4 & 59.5 & 72.4 & 63.6 & 77.0 & 66.7 & 81.2 & 43.7 & 60.7 & 65.1 \\
& WiSE-FT  \cite{WiSE-FT}    & 26.7 & 86.5 & 64.3 & 57.1 & 65.7 & 58.7 & 71.1 & 70.5 & 75.8 & 36.9 & 54.6 & 60.7 \\
& ZSCL   \cite{ZSCL}      & 45.1 & 92.0 & 80.1 & 64.3 & \underline{79.5} & 81.6 & \textbf{89.6} & \underline{75.2} & \underline{88.9} & 64.7 & \textbf{68.0} & 75.4 \\
\cmidrule(lr){2-14}
& \multicolumn{2}{l}{\textit{MoE-Adapters4CL}} \\
& AdamW        & \underline{52.4} & \textbf{93.3} & \underline{81.6} & \underline{70.0} & 75.3 & \textbf{84.5} & \underline{88.8} & \textbf{75.4} & \textbf{89.1} & \underline{68.5} & \underline{66.8} & \underline{76.9} \\
& Muon         & \textbf{64.2} & \underline{92.9} & \textbf{83.2} & \textbf{70.9} & \textbf{79.8} & \underline{81.9} & 88.7 & 74.4 & \textbf{89.1} & \textbf{69.0} & 66.7 & \textbf{78.3} \\
\midrule
\multirow{9}{*}{\emph{Last}}
& Continual-FT & 31.0 & 89.3 & 65.8 & 67.3 & 88.9 & 71.1 & 85.6 & \textbf{99.6} & 92.9 & 77.3 & 81.1 & 77.3 \\
& LwF \cite{Li2018LwF}          & 26.3 & 87.5 & 71.9 & 66.6 & 79.9 & 66.9 & 83.8 & \textbf{99.6} & 92.1 & 66.1 & 80.4 & 74.6 \\
& iCaRL  \cite{iCaRL}      & 35.8 & \underline{93.0} & 77.0 & 70.2 & 83.3 & 88.5 & \underline{90.4} & 86.7 & \underline{93.2} & 81.2 & \textbf{81.9} & 80.1 \\
& LwF-VR  \cite{LwF-VR}     & 20.5 & 89.8 & 72.3 & 67.6 & 85.5 & 73.8 & 85.7 & \textbf{99.6} & 93.1 & 73.3 & 80.9 & 76.6 \\
& WiSE-FT  \cite{WiSE-FT}    & 27.2 & 90.8 & 68.0 & 68.9 & 86.9 & 74.0 & 87.6 & \textbf{99.6} & 92.6 & 77.8 & \underline{81.3} & 77.7 \\
& ZSCL   \cite{ZSCL}      & 40.6 & 92.2 & 81.3 & 70.5 & 94.8 & 90.5 & \textbf{91.9} & 98.7 & \textbf{93.9} & 85.3 & 80.2 & 83.6 \\
\cmidrule(lr){2-14}
& \multicolumn{2}{l}{\textit{MoE-Adapters4CL}} \\
& AdamW        & \underline{52.4} & \textbf{93.4} & \underline{87.8} & \underline{80.0} & \underline{95.6} & \textbf{97.3} & 89.1 & 98.7 & 89.2 & \underline{85.5} & 79.8 & \underline{86.3} \\
& Muon         & \textbf{64.2} & 92.6 & \textbf{88.6} & \textbf{81.0} & \textbf{98.4} & \underline{96.8} & 89.0 & \underline{99.5} & 89.2 & \textbf{88.2} & 79.2 & \textbf{87.9} \\
\bottomrule
\end{tabular}}
\caption{Full 11-task MTIL evaluation on the benchmark of~\cite{ZSCL}. We report Transfer, Average, and Last. Relative to AdamW, Muon improves the mean Transfer by $+0.6$, Average by $+1.4$, and Last by $+1.6$ points.}
\label{tab:cl_mtil-supp}
\end{table*}
\paragraph{Multi-Domain Task-Incremental Learning.}

Table~\ref{tab:cl_mtil-supp} presents the full 11-task MTIL evaluation.
Relative to the AdamW baseline, Muon improves the mean Transfer score by
$+0.6$ points, the mean Average score by $+1.4$
points, and the mean Last score by $+1.6$ points
. The most striking single-dataset improvement is on
Aircraft, where Muon raises all three metrics by $+11.8$ points ($64.2$
vs.\ $52.4$), a domain requiring fine-grained visual discrimination whose
low-rank gradient structure may particularly benefit from spectral
normalization of the update matrix. Muon also achieves notable gains on
EuroSAT Last ($+2.8$; $98.4$ vs.\ $95.6$) and Stanford Cars Last ($+2.7$;
$88.2$ vs.\ $85.5$). On a few datasets (notably Flowers Average and MNIST
Average), AdamW retains a small advantage, consistent with the
observation that tasks with very high baseline accuracy leave less room for
the geometric regularization provided by Muon.

Taken together, the results across both baselines and all protocols confirm
that Muon provides a systematic benefit in the continual learning setting
that is attributable solely to the optimizer change, since all other
components of the pipeline, including architecture, data order, training
budget, and DDAS checkpoints, are held constant.

\end{document}